\documentclass{article}

\usepackage{microtype}
\usepackage{graphicx}
\usepackage{subfigure}
\usepackage{hyperref}
\usepackage{booktabs} 
\usepackage{enumitem}

\usepackage{hyperref}
\usepackage{thm-restate}



\usepackage[accepted]{icml2024}

\usepackage{amsmath}
\usepackage{amssymb}
\usepackage{mathtools}
\usepackage{amsthm}

\usepackage[capitalize,noabbrev]{cleveref}

\usepackage{hyperref}       
\usepackage{url}            
\usepackage{booktabs}       
\usepackage{amsfonts}       
\usepackage{multirow}
\usepackage{lipsum}
\usepackage{indentfirst}
\usepackage{dsfont}
\usepackage{relsize}
\usepackage{svg}
\usepackage{amsthm}
\usepackage{xspace}
\usepackage{caption}
\usepackage{subcaption}
\usepackage{listings}
\usepackage{threeparttable}
\usepackage{array}
\usepackage{float}
\usepackage{ragged2e}
\usepackage{xcolor}
\usepackage{amssymb}
\usepackage{tabularx}
\usepackage{tikz}
\usepackage{bbm}

\usepackage{comment}
\usepackage{xspace}

\newcommand{\eg}{\textit{e.g.},\xspace}
\newcommand{\ie}{\textit{i.e.},\xspace}
\DeclareMathOperator*{\argmax}{arg\,max}
\DeclareMathOperator*{\argmin}{arg\,min}

\usepackage{lipsum}
\usepackage[export]{adjustbox}

\theoremstyle{plain}
\newtheorem{theorem}{Theorem}[section]

\newtheorem{lemma}[theorem]{Lemma}

\theoremstyle{definition}
\newtheorem{definition}[theorem]{Definition}

\theoremstyle{remark}

\usepackage[textsize=tiny]{todonotes}

 \renewcommand{\paragraph}[1]{
\noindent \textbf{#1.~} 
 }
\setenumerate{itemsep=0pt,leftmargin=0.1in}
\setitemize{itemsep=0pt,leftmargin=0.1in}

\icmltitlerunning{Online Cascade Learning for Efficient Inference over Streams}

\begin{document}

\twocolumn[
\icmltitle{Online Cascade Learning for Efficient Inference over Streams}



\icmlsetsymbol{equal}{*}

\begin{icmlauthorlist}
\icmlauthor{Lunyiu Nie}{ut}
\icmlauthor{Zhimin Ding}{rice}
\icmlauthor{Erdong Hu}{rice}
\icmlauthor{Christopher Jermaine}{rice}
\icmlauthor{Swarat Chaudhuri}{ut}
\end{icmlauthorlist}

\icmlaffiliation{ut}{The University of Texas at Austin}
\icmlaffiliation{rice}{Rice University}

\icmlcorrespondingauthor{Lunyiu Nie}{\href{mailto:lynie@utexas.edu}{lynie@utexas.edu}}
\icmlcorrespondingauthor{Swarat Chaudhuri}{\href{mailto:swarat@cs.utexas.edu}{swarat@cs.utexas.edu}}

\icmlkeywords{Machine Learning, ICML}

\vskip 0.3in
]



\printAffiliationsAndNotice{}  

\begin{abstract}
Large Language Models (LLMs) have a natural role in answering complex queries about data streams, but the high computational cost of LLM inference makes them infeasible in many such tasks.
We propose \emph{online cascade learning} as an approach to address this challenge. The objective here is to learn a ``cascade'' of models, starting with lower-capacity models 
(such as logistic regression) and ending with a powerful LLM, along with a \emph{deferral policy} that determines the model to be used on a given input. We formulate the task of learning cascades online as an imitation-learning problem, where smaller models are updated over time imitating the LLM expert demonstrations, and give a no-regret algorithm for the problem. 
Experimental results across four benchmarks show that our method parallels LLMs in accuracy while cutting down inference costs by as much as 90\% with strong robustness against input distribution shifts, underscoring its efficacy and adaptability in stream processing.

\end{abstract}

\newcommand{\llama}{Llama\xspace}
\newcommand{\approach}{\textsc{Omc}\xspace}

\section{Introduction}

Large language models (LLMs) \citep{bommasani2021opportunities,touvron2023llama,brown2020language} hold great promise as a means of one-pass query answering over text streams. For example, suppose we have a stream of movie reviews posted on the internet and would like to retrieve the ones that are positive (\Cref{fig:exp}). 
To address this without human annotations, we can create a prompt for the LLM to identify the sentiment in a review. The LLM can then use this prompt to process each statement in sequence.




However, LLM inference can be extremely expensive, especially in a streaming setup where queries arrive continuously. 
Based on our benchmarking (Appendix~\ref{sec:prefill}), 
a GPU server with eight A100 GPUs takes 3.6 seconds to process a document containing 8,192 tokens using the largest \llama \cite{touvron2023llama1}. To process one million such documents per hour would thus require 1,000 A100 servers. Using Amazon Web Services, this computation would cost more than 30,000 USD per hour, if it were even possible to procure the machines required.

\begin{figure}[t]
     \centering
     \includegraphics[width=0.8\columnwidth]{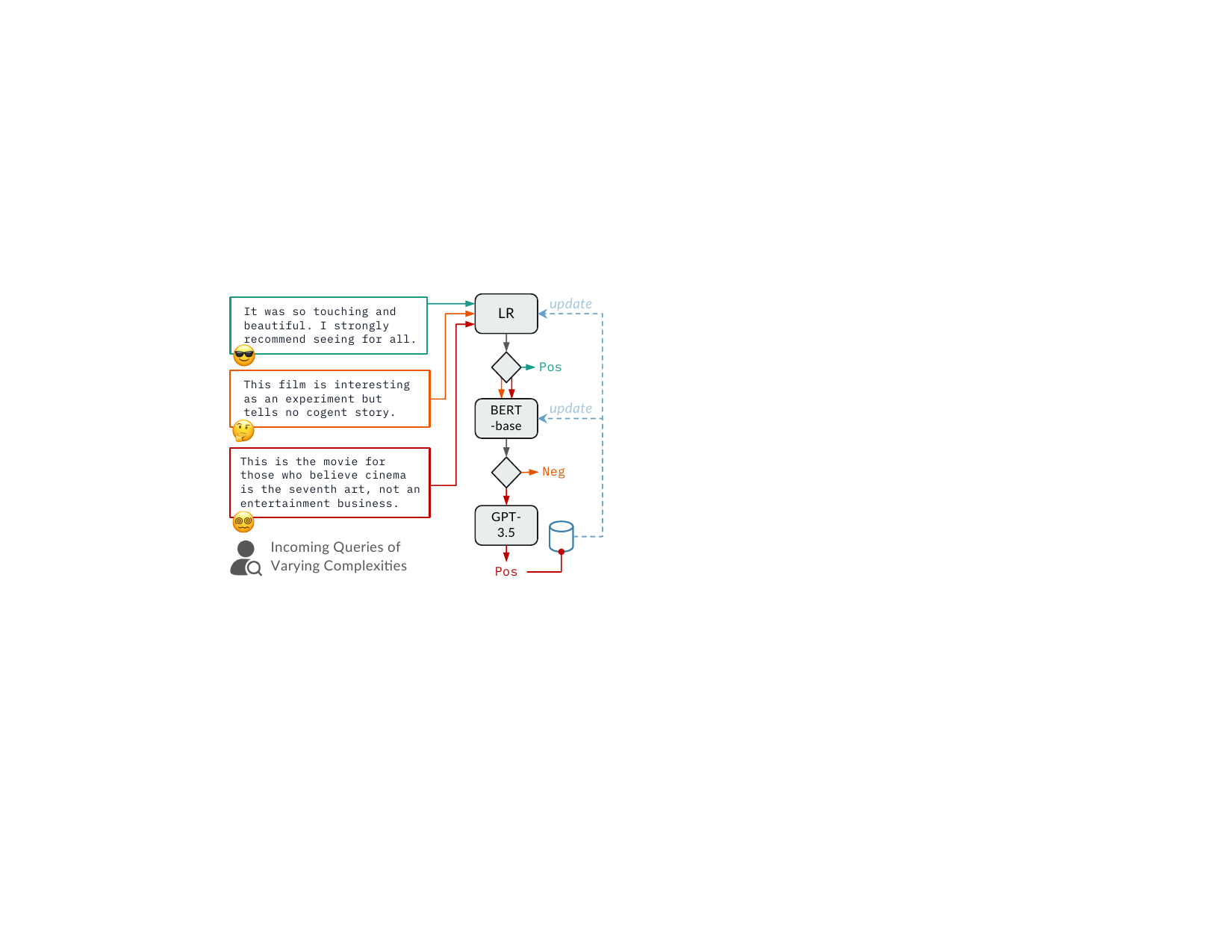}
     \caption{
A sentiment analysis task over a stream of IMDB movie reviews  \cite{imdb}. We use the cheapest logistic regression model (green lines) to process simpler queries and defer more complex queries to the larger models (orange \& red lines). When the cascade proceeds to the LLM, the annotations are collected to update the smaller models online (blue lines).
} 
     \label{fig:exp}
     \vspace{-0.2in}
\end{figure}

There are two popular ways to reduce the cost of LLM inference. The first is to 
\emph{distill} a large language model into a smaller model that can process a document using less computation \cite{hinton2015distilling, gu2023knowledge, hsieh2023distilling}. The second is to use a \emph{cascade} of models that use smaller models to process ``easier'' inputs and reserve the largest models for the most difficult inputs~\cite{DBLP:conf/emnlp/VarshneyB22, frugalgpt}.
However, 
these existing proposals assume a model of learning in which a labeled training set is available beforehand, making them unsuitable for a streaming setting. In contrast, the streaming setting requires \emph{online learning} \citep{hoi2021online}.

\begin{figure*}[t!]
     \centering
     \includegraphics[width=.8\textwidth]{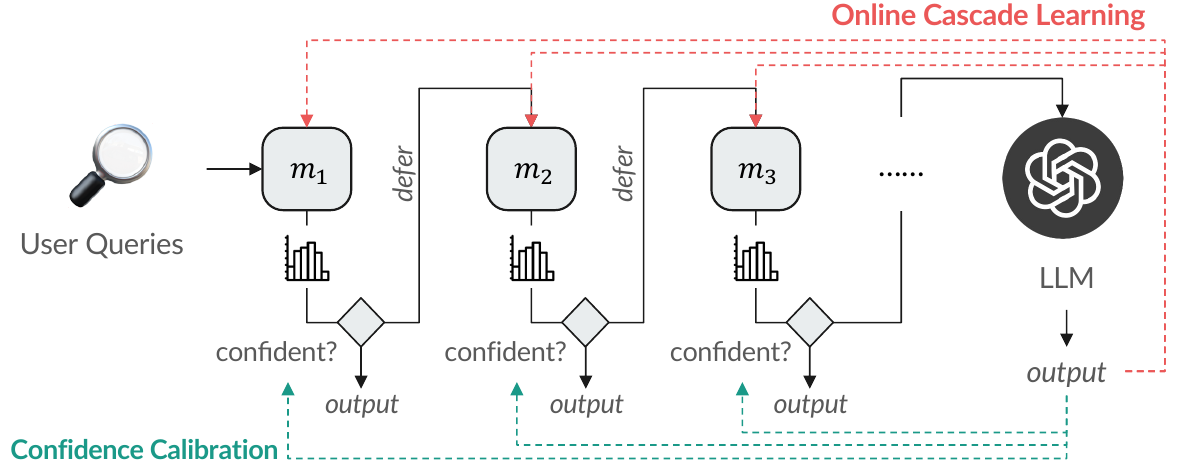}
     \caption{The proposed online cascade learning framework, where smaller models with monotonically increasing capacities and costs ($c_1 < c_2 < ... < c_N$) can progressively learn from the ongoing outputs of an LLM (as denoted in red arrows). Meanwhile, the deferral policy and corresponding confidence scores are also calibrated online (in green arrows). } 
     \label{fig:frame}
\end{figure*}

In this paper, we propose \emph{online cascade learning} to bridge this gap in the literature. 
Our goal here is to develop a cascade of models, arrayed from the least to the most complex, that can process queries in a data stream with optimal cost-performance trade-offs. The critical difference from prior work is that our cascades are trained in a fully online manner and do not require any human-labeled training data. The smaller models in the cascade would continuously evolve and improve over time by imitating the LLM's demonstrations on the harder queries, enabling them to handle an expanding range of queries with increasing proficiency.

A key component of our cascades is the \emph{deferral policy} that decides, given an input, the best ``level" of the cascade that should handle the input (\Cref{fig:frame}).
At startup, the policy keeps its ``gates'' open, allowing all initial inputs to flow through the cascade and be processed by the most expensive model (an LLM). These processed inputs then become training labels for updating the smaller models and the deferral policy within the cascade. Over time, as the model sees more data, the system stabilizes at a state where the smaller, less expensive models can handle the majority of the new inputs. The framework also incorporates a set of learning hyperparameters that adjust the trade-off between accuracy and cost based on user needs.

We formalize the problem of online cascade learning
in terms of an episodic Markov decision process (MDP) that considers both prediction loss and computational costs for co-optimization. We assume an expert policy --- a high-capacity LLM --- for this MDP. We learn the various components of the cascade through imitation learning \citep{ross2011reduction} based on the LLM demonstrations. We show that our algorithm comes with a theoretical no-regret guarantee. Our experimental results, on four tasks of various complexity, show that our proposed method can achieve accuracy comparable to LLM at a vastly reduced inference cost.

To summarize the main contributions of our work: 
\begin{itemize}[leftmargin=0.15in,itemsep=0em]
    \item We introduce \textit{online cascade learning}, a new framework for learning model cascades in resource-intensive streaming analytics settings. The framework enables systematic trade-offs between prediction accuracy and resource usage, and allows learning without any human annotations. 
    \item We offer a formulation of the online learning of cascades in terms of episodic MDPs and give a no-regret imitation learning method for solving this problem. 
    \item We present rigorous experiments showing that our proposed algorithm can achieve comparable accuracy as LLMs while saving up to $90\%$ of the inference costs. Our source code is available at \url{https://github.com/flitternie/online_cascade_learning}.
\end{itemize}

\section{Problem Formulation}


\newcommand{\Init}{\overline{\mathcal{X}}}
\newcommand{\X}{\mathcal{X}}
\newcommand{\Trans}{\mathcal{T}}

\paragraph{Inference over Streams as an MDP Problem}
We consider stream processing scenarios that have as input a fixed infinite stream $X = \langle x_1, \dots, x_t, \dots\rangle$ of user queries.
The $t$-th query $x_t$ is associated with a ground-truth label $y_t \in Y$, where $Y$ is a label set. 
Our goal is to predict the label for each $x_t$ using an $N$-level model cascade. 

We formulate our problem using an episodic Markov decision process (MDP) $(S, A, \Trans, C)$. Here:
\begin{itemize}[leftmargin=0.1in,itemsep=0em]
\itemsep0em 
\vspace{-0.2em}
    \item $S$ is a set of states. A state in the $t$-th episode is either a pair $\langle x_t, i\rangle$, where 
    $i \in \{1, ..., N\}$ indicates the current cascade level, or a special 
   terminal state \texttt{exit} that ends the episode. 
   The initial state of the $t$-th episode is $\langle x_t, 1\rangle.$
    For clarity, we abbreviate $\langle x_t, i \rangle$ by $s_{t,i}$.
    
    \item $A$ is a set of actions, consisting of: 
    \begin{itemize}
    \itemsep0em 
        \item The label set $Y$, representing the potential predictions if the cascade chooses to output at the current state. For instance, in a binary classification task, \( Y=\{0,1\} \). 
        \item A special action \texttt{defer} that activates the next level of the cascade. 
    \end{itemize}
    \item $\Trans(s_{t,i}, a)$ is a deterministic transition function, consisting of transitions of the form:  
    \begin{itemize}
    \itemsep0em 
        \item $\Trans(s_{t,i}, a) = \texttt{exit}$ for $a \in Y$.
        \item $\Trans(s_{t,i}, \texttt{defer}) = s_{t,i+1}$. 
    \end{itemize}

    \item $C(s_{t,i}, a)$ is a cost function defined as:
        \begin{equation*}
        C(s_{t,i},a)=\begin{cases}
    \mathcal{L}(a|y_t)& \text{if $a\in Y$},\\
    \mu  c_{i+1} & \text{if $a=$\texttt{defer}.}
  \end{cases}
        \end{equation*}
    Here, $\mathcal{L}(a|y_t)$ is a \emph{prediction loss} that measures the accuracy of the cascade's prediction. $c_{i+1}$ represents the penalty we pay for a deferral --- intuitively, this penalty captures the computational overheads of going one level deeper into the cascade. 
    The adjustable constant $\mu$ guides the trade-off between computational cost and accuracy.
\end{itemize}
\paragraph{Online Cascade Learning} 
We now formulate online cascade learning as the problem of solving the above episodic MDP.
Let a \emph{policy} $\pi$ be a stochastic map from states to actions. We use $\pi (s_{t,i}, \texttt{defer})$ to represent the probability that $\pi$ chooses to defer in state $s_{t,i}$, and $\pi (s_{t,i}, y)$ to represent the probability that $\pi$ chooses $y \in Y$, conditioned on no deferral having occurred.

The probability of a policy $\pi$ entering state $s_{t,i}$ in episode $t$ is denoted as $p^{s_{t,i}}_{\pi}$. For $i > 1$, $s_{t,i}$ can be reached if and only if the policy chooses the \texttt{defer} action in all preceding states $s_{t,1}, ..., s_{t,i-1}$ within the current episode. Thus, 
$$p^{s_{t,1}}_\pi = 1, \qquad\qquad  p^{s_{t,i}}_{\pi} = \prod^{i-1}_{j=1}\pi (s_{t,j}, \texttt{defer}).$$ 
Then, the 
cost of executing $\pi$ over $T$ episodes is computed by summing over all the episodes and cascade levels: 
\begin{align}
J(\pi, T) = \sum_{t=1}^{T}\left[ \sum_{i = 1}^N p^{s_{t,i}}_{\pi} C_{\pi}(s_{t,i})\right] 
\label{eq:loss}
\end{align}

Here, $C_\pi(s_{t,i})$ is the expected, immediate cost of applying policy $\pi$ at state $s_{t,i}$.  This is computed as: 
\begin{align*}
    C_\pi(s_{t,i}) &=  \pi (s_{t,i}, \texttt{defer}) \cdot \mu c_{i+1} \\ &+ (1 - \pi (s_{t,i}, \texttt{defer})) \cdot \sum_{y \in Y} \pi (s_{t,i}, y)  \cdot \mathcal{L}(y | y_t). 
\end{align*}

After having seen the first $T$ queries in the input stream $X$, our learning goal is to find a policy that minimizes $J(\pi, T)$.
When $T$ is clear from the context, we often abbreviate $J(\pi, T)$ by $J(\pi)$. 

\paragraph{Policy Representations}
We represent policies in a factorized way using a set of \emph{classification models} 
$\langle m_1,\dots,m_N\rangle$ 
that constitute the different levels of the cascade, and a set of \emph{deferral functions} 
$\langle f_1,\dots, f_{N -1}\rangle$ that decide whether the current level can perform a high-confidence classification or to defer.
We assume each $m_i$ to produce a vector of probabilities, with one probability for each label, and each $f_i$ to produce a probability of deferral. Then the overall policy has the form:  
\begin{align}
\pi(s_{t,i}, \texttt{defer}) &= f_i (m_i (x_t)), \nonumber \\
\pi(s_{t,i}, y) &=  \left(m_i (x_t) \right)[y] \textrm{ for } y \in Y.  \nonumber
\end{align} 

We assume the two parameterized function representations for each level $i$: the classification model $m_i$ and the deferral function $f_i$, are both characterized by a crucial property: they guarantee uniform computational costs for evaluating the functions, regardless of the specific function parameters and function inputs. This means that for any instantiation of the parameters in $m_i$ and $f_i$, the inference costs remain constant, irrespective of the specific input query. This assumption is reasonable in the practical scenarios we target. For example, the inference cost of a BERT-base model is approximately the same, no matter how it is parameterized. This uniform cost assumption also underpins the use of fixed cost penalties $c_i$ in our MDPs.




\section{Learning Algorithm}

To learn cascades online without human annotations, we propose an imitation learning algorithm that assumes an \emph{expert policy} (an LLM) that can demonstrate ground-truth labels. 
The algorithm iteratively updates the classification models and deferral functions in the cascade by imitating the expert as in DAgger \cite{ross2011reduction}. However, unlike traditional imitation learning, our goal here is to balance computational efficiency and accuracy. 

We incorporate such an expert into our cascades by assuming that the \emph{final} classification model
$m_N$ is the expert LLM. When invoked on a query $x_t$, $m_N$ always outputs a vector whose largest value is associated with ground-truth label $y_t$. However, it may not always be the case that choosing to invoke $m_N$ leads to optimal cost. There may be a smaller model $m_i$ for $i < N$ that also produces the correct label $y_t$, without the cost of the additional $\texttt{defer}$ actions. Indeed, an optimal policy may occasionally incur prediction errors to manage the cost of incurring $\texttt{defer}$ actions. Designing an algorithm that can train a policy in an online fashion to manage those trade-offs is at the core of the paper.


Our overall algorithm is detailed in Algorithm \ref{algo:cascade}. Here, $m_1, ..., m_{N}$ are the classification models (with $m_N$ being the expert for imitation) and $f_1, ..., f_{N-1}$ are the deferral functions. The smaller models $m_1, ..., m_{N-1}$ 
are initialized either randomly or with their respective pretrained weights.
For each incoming query $x_t$ (outer loop), we sequentially utilize the $i$-th model in the cascade to make a prediction (inner loop). Specifically, the model generates a probability vector $\textit{pred}_i$, which is processed by the deferral function $f_i$. This yields a deferral probability score that can be discretized into a binary $\textit{action}_i$ to determine whether the prediction at the current cascade level is reliable.

If the action is $\texttt{defer}$, the query would be navigated to the cascade's next $(i+1)$-th model. Otherwise, it will make a prediction $\hat{y}_t = \argmax(\textit{pred}_i)$, and break the inner loop (succeeding models will not be activated for current query $x_t$). If the query has been deferred to the LLM expert $m_N$ at the last cascade level, its annotation $\hat{y}_t$ is regarded as the ground truth $y_t$ and aggregated to the dataset $\mathcal{D}$.

Throughout the inner iteration, at cascade level $i$, it may optionally skip the rest layers and jump to the LLM expert $m_N$ at a non-zero decaying probability $\beta_i$ to directly obtain its demonstration and aggregate it to dataset $\mathcal{D}$, similar to DAgger \cite{ross2011reduction}, for faster convergence. 

\begin{algorithm}[t]
\caption{Online Cascade Learning.}
\label{algo:cascade}
\begin{algorithmic}
\STATE Initialize models $m_1, ..., m_N$, deferral functions $f_1, ..., f_{N-1}$, $\beta_1$, $\mathcal{D} \leftarrow \emptyset$
\FOR {$x_t$ in stream $X$}
\FOR {$m_i$ in $m_1$ to $m_N$}
\STATE At probability $\beta_{t}$:
\STATE \ \ \ \ \textbf{Jump} to $m_N$ \textcolor{purple}{// \textit{like DAgger}}
\STATE $\textit{pred}_i = m_i(x_t)$  \textcolor{purple}{// \textit{ probability vector}}
\STATE $\textit{action}_i \sim f_i(\textit{pred}_i)$ \textcolor{purple}{// \textit{whether to defer}}
\IF{$m_i$ is $m_N$ \textbf{or} $\textit{action}_i \neq \texttt{defer}$}
\STATE $\hat{y}_t = \argmax(\textit{pred}_i)$ \textcolor{purple}{// \textit{discretization}}
\STATE $\mathcal{D} \leftarrow \mathcal{D} \cup \{x_t, \hat{y}_t\}$ \textbf{if} $m_i$ is $m_N$ 
\STATE \textbf{Break}
\ENDIF
\ENDFOR
\STATE Output $\hat{y}_t$
\STATE Update $m_1$ to $m_{N-1}$ on $\mathcal{D}$ via OGD \textcolor{purple}{// \textit{imitating expert}}
\STATE Compute loss $J(\pi, t)$ and update $f_1$ to $f_{N-1}$ via OGD
\STATE Decay $\beta_{t+1}$
\ENDFOR
\end{algorithmic}
\end{algorithm}

After processing each query, the algorithm outputs the prediction $y_t$, then updates the small models $m_1, ..., m_{N-1}$ to mimic the expert demonstrations on the collected trajectories $\mathcal{D}$. Similarly, the deferral functions $f_1, ..., f_{N-1}$ are also updated based on the loss computed by Equation (\ref{eq:loss}). The algorithm continuously collects annotations from the LLM expert (\eg at a decaying probability $\beta_t$ or when the query is deferred to $m_N$) and updates the policy via online gradient descent (OGD). Practically, the user can change the cost weighting factor $\mu$ in the loss function $J(\pi)$ and the initial decaying factor $\beta_1$ for adjusting cost budgets. 


\paragraph{Theoretical Analysis}
\label{sec:theory}
In online learning, a policy's \textit{regret} over time $T$ is its total cost minus the cost of the best fixed policy in hindsight. 
In our setting, the regret of a learned policy $\pi$ is: 
\begin{align}
\gamma&= J(\pi, T) - \min\limits_{\pi \in \Pi} J(\pi, T)\\
&=\sum^{T}_{t=1} \sum^{N}_{i=1} p^{s_{t,i}}_{\pi} C_\pi(s_{t,i}) - \min\limits_{\pi\in\Pi}\sum^{T}_{t=1} \sum^{N}_{i=1} p^{s_{t,i}}_{\pi}C_\pi(s_{t,i}),
\label{eq1}
\end{align}
where $\Pi$ denotes the whole possible policy space. An algorithm is defined to have \emph{no-regret} if it can produce a sequence of policies $\pi_1, \dots, \pi_T$ such that the expected average regret $\gamma/T $ goes to $0$ as $T \rightarrow \infty$ \cite{ross2011reduction}. 

To aid our no-regret analysis of online cascade learning, we start by constructing a simplified \textit{online ensemble learning} algorithm under the same stream processing setting that comprises the linear combination of a series of classification models $m_1, ..., m_N$, each with a static operating probability  $\sum_{i=1}^{N}w_i=1$, without any deferral functions. Let us denote the model parameters of $m_i$ at time $t$ by $m_i^t$. Assuming a convex, differentiable cost function $c^t$ for all $t$ that can evaluate $m_i^t$, and bounded, closed, nonempty model spaces $||M_i||$ for all $m_i$, we analyze the regret of this algorithm. 

\begin{restatable}[]{theorem}{oel}
With online gradient descent and a learning rate $\eta_t = t^{-1/2}$, the total regret $\gamma$ of the \textbf{online ensemble learning} algorithm is bounded as follows:    
\begin{align}
\label{eq:ensemble_regret}
\gamma&=\sum^T_{t=1}\sum^N_{i=1}w_i \cdot c^t(m^t_i) - \min\limits_{m_i\in M^i}\sum^T_{t=1}\sum^N_{i=1}w_i\cdot c^t(m^t_i) \nonumber \\
&\leq \frac{||M||^2\sqrt{T}}{2} + (\sqrt{T}-\frac{1}{2}) ||\nabla c||^2.
\end{align}
Therefore, $\lim_{T\rightarrow \infty} \gamma/T \leq 0$.
\label{proof:ensemble}
\end{restatable}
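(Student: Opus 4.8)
The plan is to reduce the ensemble regret to a weighted sum of $N$ \emph{independent} single-model online-convex-optimization regrets, and then apply the classical online gradient descent (OGD) analysis to each one. First I would exploit the fact that the mixing weights $w_i$ are fixed and that each model $m_i$ is updated independently: since the per-round objective $\sum_i w_i c^t(m_i)$ separates across $i$, the joint minimizer in hindsight decomposes term by term, giving
$$\gamma = \sum_{i=1}^N w_i\left(\sum_{t=1}^T c^t(m_i^t) - \min_{m_i\in M^i}\sum_{t=1}^T c^t(m_i)\right) = \sum_{i=1}^N w_i\,\gamma_i,$$
where $\gamma_i$ is the OGD regret of the $i$-th model. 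Because $\sum_i w_i = 1$, it then suffices to bound each $\gamma_i$ by the claimed right-hand side under the uniform diameter and gradient bounds, and take the convex combination.

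For a single model, I would run the standard potential argument. Convexity and differentiability of $c^t$ give $c^t(m_i^t) - c^t(m_i^\ast) \le \nabla c^t(m_i^t)^\top (m_i^t - m_i^\ast)$ for the best fixed point $m_i^\ast$. Writing the OGD update as a gradient step followed by Euclidean projection onto the closed, bounded, nonempty set $M^i$, non-expansiveness of the projection yields
$$||m_i^{t+1}-m_i^\ast||^2 \le ||m_i^t-m_i^\ast||^2 - 2\eta_t\,\nabla c^t(m_i^t)^\top(m_i^t-m_i^\ast) + \eta_t^2\,||\nabla c^t(m_i^t)||^2.$$
Rearranging the middle term and summing over $t$ converts $\gamma_i$ into a telescoping potential term plus a step-size term.

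The two remaining estimates are routine but require care with constants. With $\eta_t = t^{-1/2}$, so that $1/\eta_t = \sqrt{t}$ is increasing, an Abel-summation (rearrangement) argument collapses the telescoping sum $\sum_t \big(||m_i^t-m_i^\ast||^2 - ||m_i^{t+1}-m_i^\ast||^2\big)/(2\eta_t)$, using $||m_i^t - m_i^\ast||^2 \le ||M||^2$, to at most $||M||^2/(2\eta_T) = ||M||^2\sqrt{T}/2$. The gradient term becomes $\tfrac12\sum_{t=1}^T \eta_t\,||\nabla c^t(m_i^t)||^2 \le \tfrac12\,||\nabla c||^2\sum_{t=1}^T t^{-1/2} \le (\sqrt{T}-\tfrac12)\,||\nabla c||^2$, where the last inequality uses the integral comparison $\sum_{t=1}^T t^{-1/2}\le 2\sqrt{T}-1$. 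Adding these gives the per-model bound, the weighted average over $i$ preserves it, and dividing by $T$ and sending $T\to\infty$ drives $\gamma/T$ to $0$, establishing the no-regret conclusion.

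I expect the main obstacle to be the telescoping step with a \emph{time-varying} learning rate together with pinning down the exact constants $\tfrac12$ and $-\tfrac12$ in the stated bound; in particular, the monotonicity of $1/\eta_t$ and the integral comparison must be handled so the constants match the statement precisely rather than merely up to $O(\sqrt{T})$. The ensemble-to-single-model reduction is the only genuinely new ingredient, and it hinges entirely on the weights $w_i$ being \emph{fixed} so that the objective separates; were the weights themselves learned online, this decomposition would fail and a coupled argument would be required.
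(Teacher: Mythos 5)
Your proposal is correct and follows essentially the same route as the paper's proof: both reduce the bound to Zinkevich's OGD analysis (convexity linearization, non-expansive projection, telescoping the potential with the time-varying step size $\eta_t = t^{-1/2}$, and the integral comparison $\sum_{t=1}^T t^{-1/2} \le 2\sqrt{T}-1$), then absorb the weights via $\sum_i w_i = 1$. The only cosmetic difference is that you decompose the ensemble regret into per-model regrets up front, whereas the paper carries the fixed weights $w_i$ through the whole argument and invokes $\sum_i w_i = 1$ at the end.
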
 
The proof of Theorem \ref{proof:ensemble} can be constructed as an extension of Theorem 1 in \citet{zinkevich2003online}. Here, $||M||$ denotes the maximum distance within model spaces, and $||\nabla c||$ represents the largest gradient magnitude of the cost function across models. We defer its proof to Appendix \ref{app:ensemble}.

\begin{restatable}[]{theorem}{ocl}
For \textbf{online cascade learning} with $\eta_t = t^{-1/2}$, the algorithm's total regret is $o(T)$, implying $\lim_{T\rightarrow\infty} \gamma/T \leq 0$, i.e., the average regret approaches zero as $T$ grows $\infty$.
\label{proof:cascade}
\end{restatable}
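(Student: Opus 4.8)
The plan is to reduce online cascade learning to the online \emph{ensemble} learning setting of Theorem~\ref{proof:ensemble} through the DAgger reduction of \citet{ross2011reduction}, so that the Zinkevich-style bound already established there can be applied level by level. Write $\ell_t(\pi) = \sum_{i=1}^N p^{s_{t,i}}_{\pi} C_{\pi}(s_{t,i})$ for the per-episode cost, so that $J(\pi,T) = \sum_{t=1}^T \ell_t(\pi)$, and let $\pi_1,\dots,\pi_T$ be the policies the algorithm actually plays. The obstruction to invoking Theorem~\ref{proof:ensemble} directly is that the visitation weights $p^{s_{t,i}}_{\pi} = \prod_{j=1}^{i-1} \pi(s_{t,j},\texttt{defer})$ are themselves functions of the deferral parameters being optimized; this product turns $\ell_t$ into a non-convex function of the joint policy, so it does not fit the convex online-optimization template of \citet{zinkevich2003online}.

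To remove this obstruction I would introduce the DAgger surrogate $\hat\ell_t(\pi) = \sum_{i=1}^N p^{s_{t,i}}_{\pi_t}\, C_{\pi}(s_{t,i})$, in which the visitation weights are frozen at the values $w_i^t := p^{s_{t,i}}_{\pi_t} \in [0,1]$ induced by the already-committed policy $\pi_t$ (the policy whose sampled \texttt{defer} actions, together with the jumps at rate $\beta_t$, generated the episode-$t$ trajectory). With the weights held constant, $\hat\ell_t$ is exactly a nonnegative combination $\sum_i w_i^t c^t(m_i^t)$ of per-level costs of the ensemble form analyzed in Theorem~\ref{proof:ensemble}, augmented by deferral penalties $w_i^t\, \pi(s_{t,i},\texttt{defer})\,\mu c_{i+1}$ that are linear in $f_i$. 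This matches the alternating OGD structure of Algorithm~\ref{algo:cascade}: each $m_i$ is updated on a convex classification surrogate imitating the expert labels in $\mathcal{D}$, and each $f_i$ is updated, with $m_i$ frozen, on the now-linear deferral cost. Applying Theorem~\ref{proof:ensemble} (equivalently, Zinkevich's bound) to each of the $N$ levels with $\eta_t = t^{-1/2}$ and the inherited bounded-model-space and bounded-gradient hypotheses yields a surrogate regret $\sum_{t} \hat\ell_t(\pi_t) - \min_{\pi}\sum_t \hat\ell_t(\pi) = O(N\sqrt{T})$.

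The DAgger construction now pays off through the identity $\hat\ell_t(\pi_t) = \ell_t(\pi_t)$: evaluating the surrogate at the current policy recovers its true cost, because the frozen visitation distribution is by definition $\pi_t$'s own. Hence $\sum_t \ell_t(\pi_t)$ equals the cumulative surrogate cost of the played policies and is at most $\min_\pi \sum_t \hat\ell_t(\pi) + O(N\sqrt{T})$. The step I expect to be the main obstacle is passing from this surrogate comparator back to the true comparator $\min_\pi J(\pi,T)$: evaluated at the true minimizer $\pi^\star$, the two differ by $\sum_t \sum_i \bigl(p^{s_{t,i}}_{\pi_t} - p^{s_{t,i}}_{\pi^\star}\bigr) C_{\pi^\star}(s_{t,i})$, the accumulated mismatch between the visitation distributions of $\pi_t$ and $\pi^\star$. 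I would bound this by $C_{\max}\sum_t \lVert d_{\pi_t} - d_{\pi^\star}\rVert_1$ and argue it is $o(T)$: the decaying rate $\eta_t = t^{-1/2}$ makes the iterates $\pi_t$ slowly varying, and the surrogate no-regret property drives their average behavior toward the optimum, so the per-episode distribution gap is Ces\`aro-summable to a sublinear quantity, exactly as in the distribution-shift analysis of \citet{ross2011reduction}. Combining the $O(N\sqrt{T})$ surrogate regret with this $o(T)$ comparator gap gives total cascade regret $\gamma = o(T)$, whence $\lim_{T\to\infty}\gamma/T \le 0$.
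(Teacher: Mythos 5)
Your overall architecture mirrors the paper's: both arguments reduce the cascade to the ensemble setting of Theorem~\ref{proof:ensemble} by freezing the visitation weights $p^{s_{t,i}}$ at a reference policy and then controlling the mismatch between the frozen weights and those of the other policy appearing in the regret. You freeze at the played policy $\pi_t$ (so the mismatch lands on the comparator term), while the paper freezes at $\pi^\star$ (so it lands on the learner's term); these are the same substitution applied to opposite sides of Equation~(\ref{eq:regret_short}). Your identification of the product $\prod_j \pi(s_{t,j},\texttt{defer})$ as the source of non-convexity, and the identity $\hat\ell_t(\pi_t)=\ell_t(\pi_t)$, are both sound.

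The gap is exactly where you predicted it: the claim that $\sum_t\sum_i\bigl(p^{s_{t,i}}_{\pi_t}-p^{s_{t,i}}_{\pi^\star}\bigr)C_{\pi^\star}(s_{t,i})=o(T)$. Your justification --- that the surrogate no-regret property ``drives their average behavior toward the optimum,'' so the visitation gaps are Ces\`aro-summable to $o(T)$ --- is not a valid inference. No-regret controls cumulative \emph{loss}, not the iterates: the surrogate minimizer need not be unique, and a sequence of policies can have vanishing average surrogate regret while its deferral probabilities oscillate or settle at a non-optimal stationary point of the per-level cost (which is bilinear, hence non-convex, in $(m_i,f_i)$ jointly), in which case $\lVert d_{\pi_t}-d_{\pi^\star}\rVert_1$ need not even tend to $0$. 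Nor does the DAgger distribution-shift analysis supply this: DAgger compares against the best policy under the \emph{learner's own} induced distributions and never asserts convergence of $d_{\pi_t}$ to $d_{\pi^\star}$. The paper closes this hole with a dedicated result (Lemma~\ref{proof:lemma}): it first argues that each model's prediction loss $\mathcal{L}(a_i|y_t)$ converges to a limit $\epsilon_i$, then invokes the threshold characterization of the optimal deferral rule, $p(\pi^\star,s_i)'=\mathbbm{1}[\mu c_{i+1}-\epsilon_i\le 0]$, to show each learned deferral probability converges pointwise to that indicator, and finally propagates this through the product of deferral probabilities by induction on the cascade level. Some such structural argument about the deferral functions themselves --- not a generic consequence of no-regret --- is what your proof is missing.
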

Having established the no-regret property for online ensemble learning, we extend this to our online cascade learning algorithm. 
By replacing the static probabilities $w_i$ in Equation (\ref{eq:ensemble_regret}) with dynamic probabilities based on preceding model actions, and demonstrating convergence of these dynamic probabilities to optimal values as $T\rightarrow\infty$ (proof in Appendix \ref{app:lemma}, Lemma \ref{proof:lemma}), we mirror the total regret of online cascade learning to that of the online ensemble learning algorithm, completing the proof for Theorem \ref{proof:cascade}. Further details are in Appendix \ref{app:cascade}.

\paragraph{Confidence Calibration}
\label{sec:conf}
A reliable deferral rule is crucial for a cascade system to determine whether to invoke the next model (\ie to defer) or to output the predictions. Currently, most existing works make deferral decisions based on a confidence score, typically measured either by the maximum predictive probability across all classes \cite{DBLP:conf/iclr/WangKCKME22, DBLP:conf/emnlp/VarshneyB22}, or the Shannon entropy of the predictive distribution \cite{stogiannidis2023cache}.

However, for online cascade learning, where the model capabilities are dynamically updated, and the annotations from LLMs may be noisy, confidence-based deferral rules have been shown to be inadequate \cite{confidence}. To aid the learning of deferral functions (\ie $f_1, ..., f_{N-1}$), we adopt a post-hoc approach to calibrate the confidence estimate of a certain model's prediction $m_i(x_t)$. It is implemented using a multi-layer perceptron (MLP) that takes the corresponding model's predictive probabilities as input, and the functions can be updated with the following objective:
\begin{equation}
    \min_{\pi_i': \mathbb{R}^{|Y|}\rightarrow \mathbb{R}}\sum_{x_t \in X}\sum_{i=1}^{N-1} L(f_i(m_i(x_t)), z_i),
\end{equation}
where $z_i = \mathbbm{1}[\argmax m_i(x_t) \neq y^*_t]$ (\ie $z_i =1$ if model $m_i$'s prediction is not equal to the annotation $y^*_t$, otherwise $z_i =0$) and $L$ is the mean-squared error loss function.

Since we treat the expert LLM predictions (\ie $\argmax(m_N(x_t))$) as the ground truth labels $y_t$, calibration is only performed on those input queries where the expert LLM is invoked. During the calibration, the learning of $f_i$ would only optimize the parameters of the MLP, not the models $m_i$ in the cascade. During inference time, the post-hoc deferral function would choose to \texttt{defer} if $f_i(m_i(x_t)) > 0.5$. Otherwise, it would output the prediction of the current model.
\begin{table*}[t]
    \centering
    \resizebox{\textwidth}{!}{%
    \begin{tabular}{lccc|ccc|ccc|ccc}
    \toprule
         &  \multicolumn{3}{c}{\textbf{IMDB}} &  \multicolumn{3}{c}{\textbf{HateSpeech} {\small(Accuracy  $|$  Recall)}}  &  \multicolumn{3}{c}{\textbf{ISEAR}} & \multicolumn{3}{c}{\textbf{FEVER}}\\
    \cmidrule(lr){2-4}\cmidrule(lr){5-7}\cmidrule(lr){8-10}\cmidrule(lr){11-13}
    & $\mathcal{N}$=1300         & $\mathcal{N}$=3800         & $\mathcal{N}$=5200         & $\mathcal{N}$=600          & $\mathcal{N}$=2700         & $\mathcal{N}$=4900    
& $\mathcal{N}$=1200         & $\mathcal{N}$=1500         & $\mathcal{N}$=2700          & $\mathcal{N}$=700         & $\mathcal{N}$=2000  & $\mathcal{N}$=2800              \\ \midrule
    \textbf{GPT-3.5 Turbo}     & \multicolumn{3}{c|}{94.15} &  \multicolumn{3}{c|}{83.34\hspace{1em} $|$\hspace{1em} 83.28}& \multicolumn{3}{c|}{70.34} & \multicolumn{3}{c}{79.98} \\
    \midrule
    Distilled LR      & 82.61 & 83.60 & 87.01 & 80.18 $|$ 37.94  & 82.23 $|$ 49.25 & \textbf{85.03} $|$ 45.59 & 44.97 & 47.46 & 48.92 & 56.51 & 57.80 & 57.13\\
    Distilled BERT-base    & 85.28 & 90.18 & 90.19 & 80.49 $|$ 64.39 & 80.71 $|$ 73.88 & 79.35 $|$ 77.37 & \textbf{61.49} & 62.62 & 63.37 & 61.70 & 63.64 & 70.82 \\
     Online Ensemble Learning  &  86.73 & 88.80 & 89.95 & 82.61 $|$ 76.75 & 77.48 $|$ 76.89 & 81.55 $|$ 80.30 & 56.56 & 60.42 & 61.78
 & 61.69 & 69.78 & 76.67 \\
    \textbf{Online Cascade Learning}  &  \textbf{87.95} & \textbf{92.48} & \textbf{93.01}  & \textbf{82.66 $|$ 82.36} & \textbf{85.35} $|$ \textbf{77.20} & 83.26 $|$ \textbf{81.03} & 60.78 & \textbf{65.34} & \textbf{69.75} & \textbf{61.95} & \textbf{71.86} & \textbf{78.49}  \\
    \bottomrule
    \toprule
    \textbf{Llama 2 70B Chat}     & \multicolumn{3}{c|}{93.33} &  \multicolumn{3}{c|}{77.81\hspace{1em} $|$\hspace{1em} 82.19}& \multicolumn{3}{c|}{68.23} & \multicolumn{3}{c}{77.15} \\
    \midrule
    Distilled LR      &  82.17 & 85.80 & 86.88 & 67.94 $|$ 66.56 & \textbf{79.71} $|$ 61.73 & \textbf{81.46} $|$ 49.91 & 46.78 & 47.56 & 51.76 & 57.46 & 61.24 & 58.42 \\
    Distilled BERT-base    & 85.39 & 85.59 & 85.44 & 75.84 $|$ \textbf{78.87} & 79.18 $|$ 75.54 & 80.27 $|$ 72.21 & \textbf{62.18} & 61.84 & 65.12 & \textbf{65.88} & 65.66 & 67.54 \\
     Online Ensemble Learning  &  87.14 & 88.66 & 89.61
 &  75.99 $|$ 60.36 & 70.79 $|$ \textbf{79.16} & 76.82 $|$ 81.84 & 54.74 & 57.35 & 60.19 & 63.48 &71.27 & 76.46 \\
    \textbf{Online Cascade Learning}  & \textbf{87.58} & \textbf{92.14} & \textbf{92.63}  & \textbf{78.30} $|$ 63.06 & 78.32 $|$ 76.54 & 78.32 $|$ \textbf{82.03} & 59.24 & \textbf{63.34} & \textbf{67.25} & 63.81 & \textbf{72.47} & \textbf{77.73}  \\
    \bottomrule
    \end{tabular}
    }
    \caption{Comparison of accuracy (and recall for HateSpeech dataset) among different methods under various cost budgets. The upper part of the table uses GPT-3.5 Turbo as the LLM in the cascade, while the lower part employs Llama 2 70B Chat. To ensure fairness, the same annotation cost budgets (\ie the maximum allowable LLM calls, denoted as $\mathcal{N}$, controlled via adjusting the cost weighting factor $\mu$ and decaying factor $\beta$ for online cascade learning) are applied across all methods.}
    \label{tab:main}
\end{table*}

\section{Experimental Setup}
We construct a cascade system using three models: (i) a logistic regression model, (ii) a pretrained BERT-base model with 110M parameters \cite{kenton2019bert}, and (iii) GPT-3.5 Turbo\footnote{\url{https://platform.openai.com/docs/models/gpt-3-5}}. This diverse set of models allows us to evaluate the effectiveness and robustness of our 
approach across various complexities and types of queries. 

To further test the adaptability and scalability of our system, we also conduct supplementary experiments where (a) a Llama 2 70B Chat \cite{touvron2023llama} is used as the alternative LLM, and (b) a BERT-large with 340M parameters is incorporated to create a larger cascade. These variations aim to demonstrate the flexibility of our framework in accommodating different cascade sizes and structures. 

\paragraph{Benchmarks}
We evaluate our approach on four benchmarks that reflect the demands of real-world streaming applications
in a wide range of commercial services, from customer feedback analysis to content moderation:
\begin{itemize}[leftmargin=0.2in]
\item \textit{\textbf{IMDB.}} A binary sentiment classification benchmark with 50,000 movie reviews \citep{imdb}. The dataset has an even distribution of \texttt{positive} and \texttt{negative} samples. We use the official training split for our experiments, which contains 25,000 samples.

\item \textit{\textbf{HateSpeech.}} A binary classification dataset consisting of posts from an online forum, annotated with \texttt{hate} and \texttt{noHate} labels \cite{gibert2018hate}. After filtering, the dataset contains 10,703 samples with a pronounced class imbalance (1:7.95 ratio) between hatespeech and non-hatespeech examples. This imbalance mirrors a realistic challenge in streaming data environments, particularly in detecting harmful content.  Our evaluation on this benchmark focuses on both \textit{accuracy} and \textit{recall}. 

\item \textit{\textbf{ISEAR.}} A multi-class emotion detection benchmark encompassing 7,666 samples across seven categories (\texttt{Joy}, \texttt{Fear}, \texttt{Anger}, \texttt{Sadness}, \texttt{Disgust}, \texttt{Shame}, \texttt{Guilt}) \cite{isear}. Each category is well-represented, providing a balanced label distribution across the dataset. 

\item \textit{\textbf{FEVER.}} A fact-checking dataset with 6,512 claims manually verified against Wikipedia, labeled as \texttt{Supported} or \texttt{Refuted} \cite{thorne2018fever}. It tests our framework's ability to perform complex reasoning and information verification, a crucial aspect for real-time truth assessment in streaming data applications.
\end{itemize}

\begin{figure*}[h!]
    \centering
{\includegraphics[height=95pt]{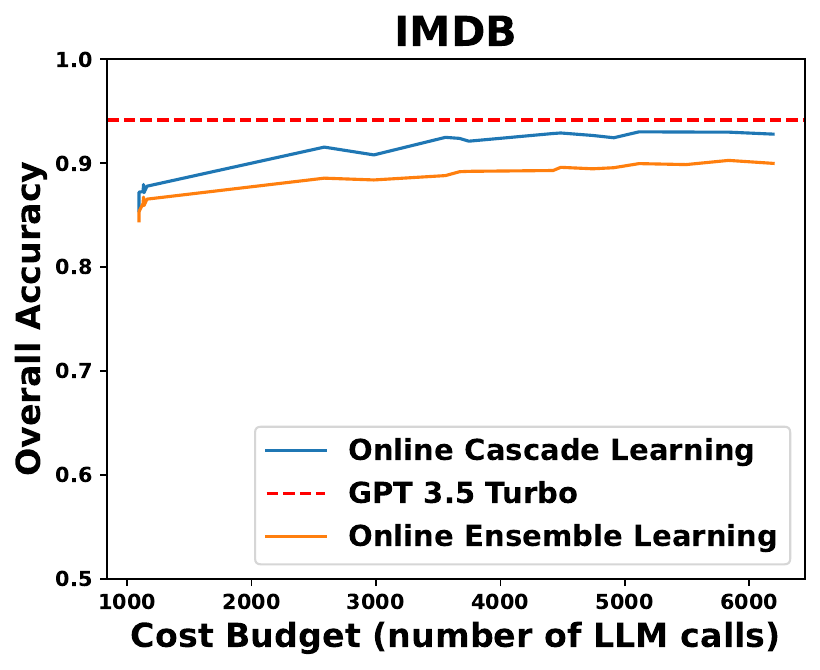}}
{\includegraphics[height=95pt]{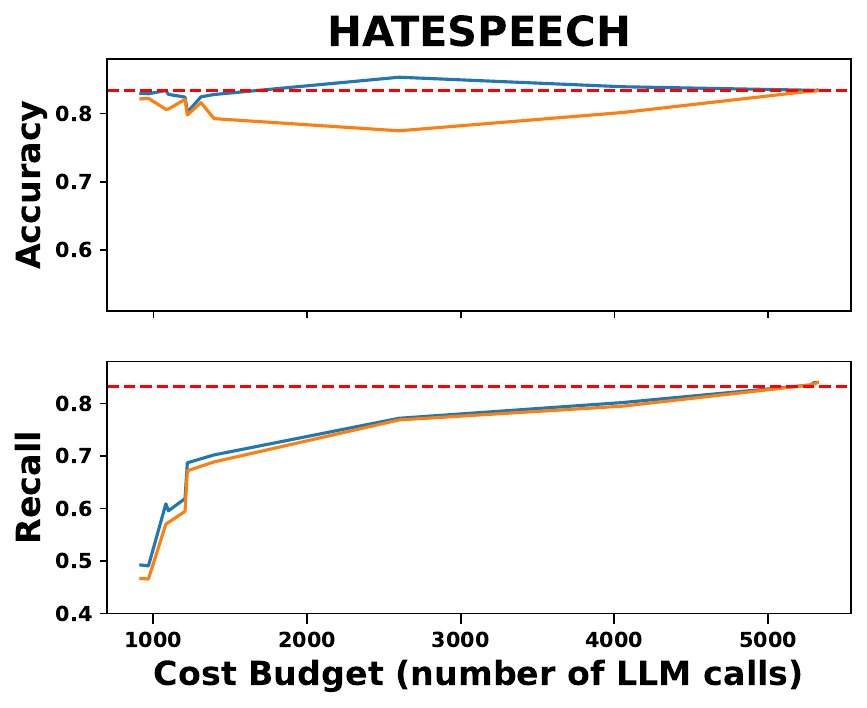}}
{\includegraphics[height=95pt]{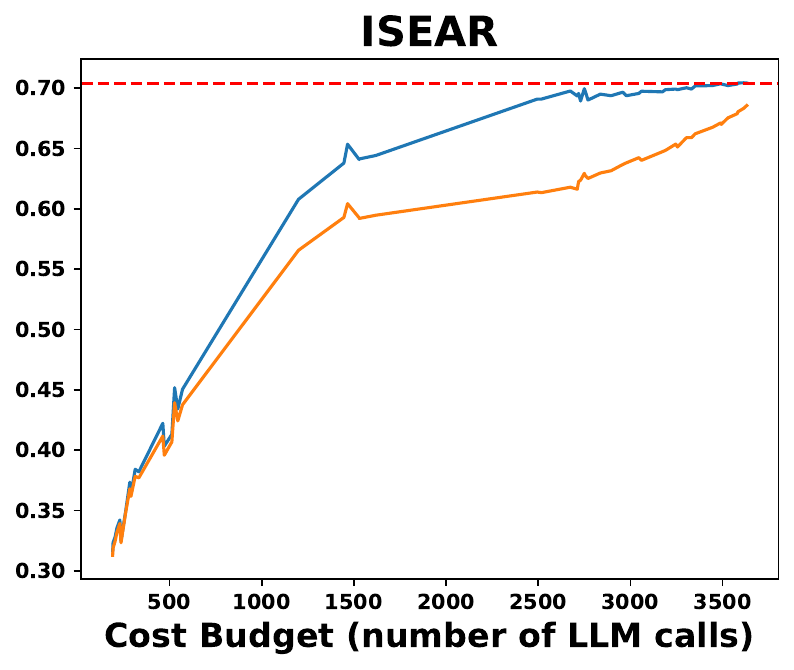}} 
{\includegraphics[height=95pt]{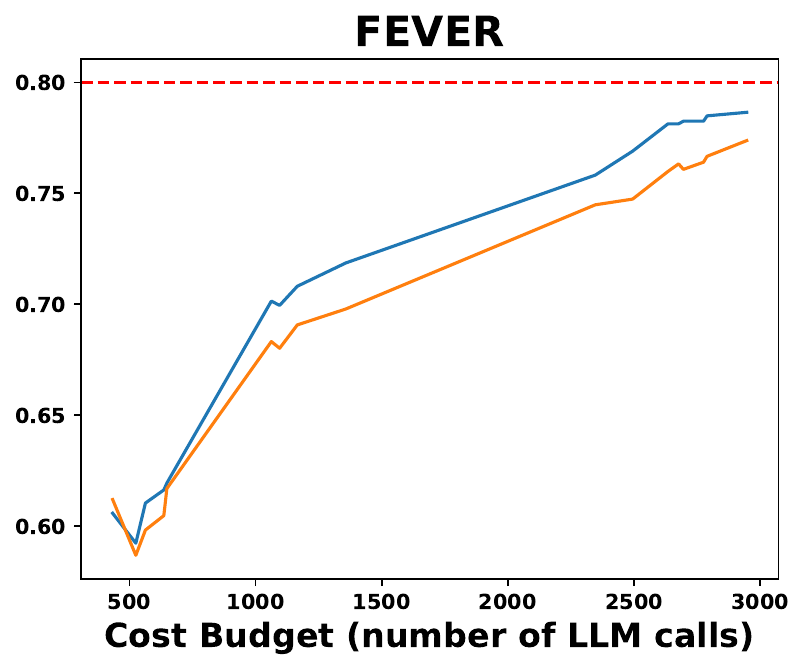}} 

    \caption{Accuracy curve (and Recall curve for HateSpeech) with respect to costs, using GPT-3.5 Turbo as the LLM in a cascade that also comprises logistic regression and BERT-base.}
    \label{fig:tradeoff_gpt}
\end{figure*}

\begin{figure*}[h!]
    \centering
{\includegraphics[height=95pt]{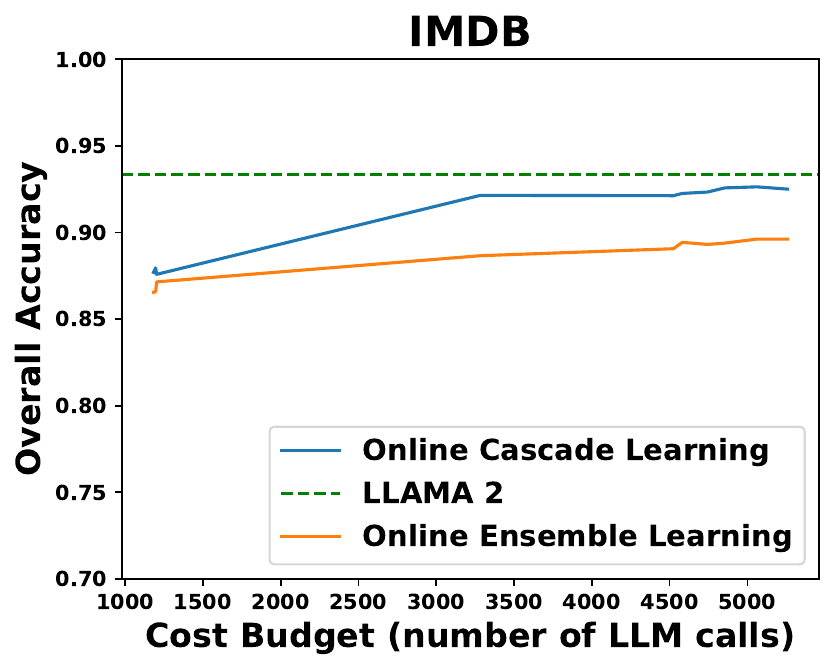}}
{\includegraphics[height=95pt]{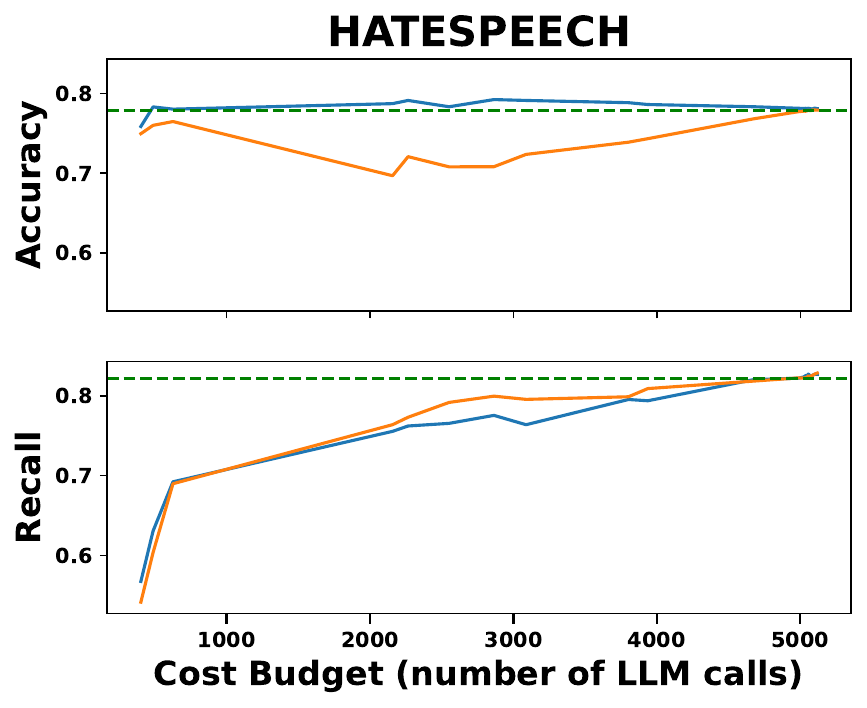}}
{\includegraphics[height=95pt]{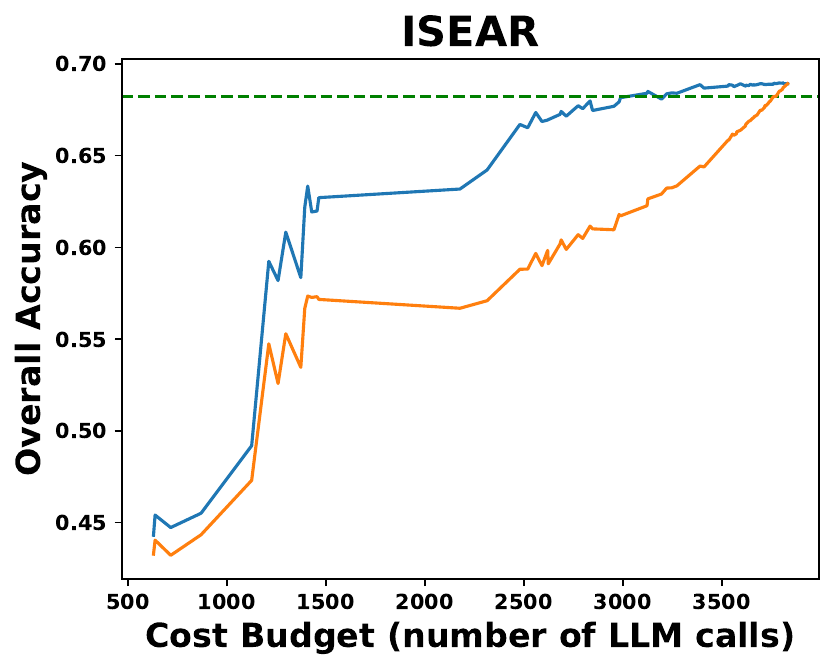}} 
{\includegraphics[height=95pt]{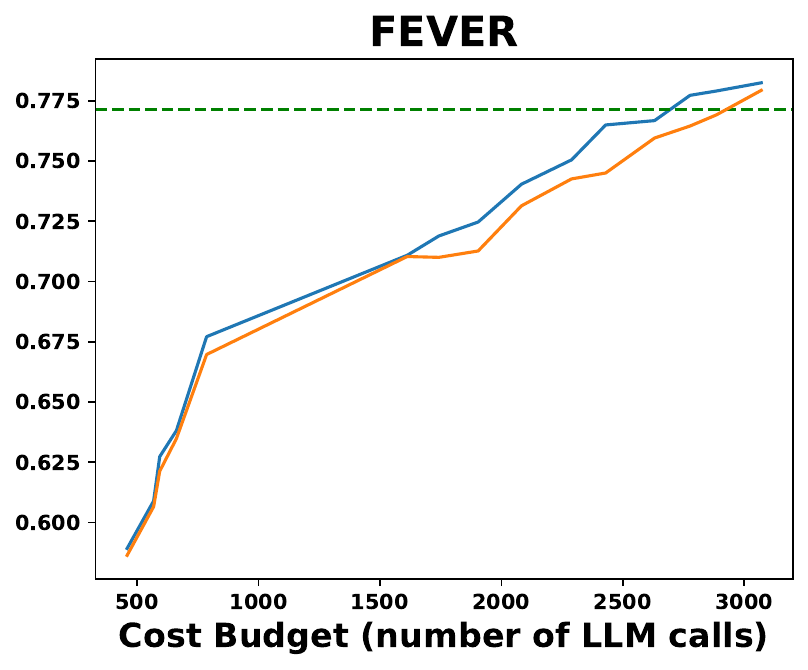}} 

    \caption{Accuracy curve (and Recall curve for HateSpeech) with respect to costs, using Llama 2 70B Chat as the LLM in a cascade that also comprises logistic regression and BERT-base. }
    \label{fig:tradeoff_llama}
\end{figure*}

\paragraph{Baselines}
\label{sec:baseline}
We compare online cascade learning against several baselines to establish its effectiveness:
\vspace{-\topsep}
\begin{itemize}[leftmargin=0.2in]
\item 
\textit{\textbf{LLMs in the Cascade:}} This includes GPT-3.5 Turbo and Llama 2 70B Chat with zero-shot task prompting (details in Appendix \ref{sec:prompt}). The LLM outputs are also used as the annotations for online cascade learning and distillation. 
    \item 
    \textit{\textbf{Knowledge Distillation:}} We fine-tune smaller models using different portions of LLM annotations. To ensure fairness, datasets are split equally, with 50\% prepared for training (as distillation labels) and the remaining 50\% for testing. All methods are evaluated on the identical test sets. In our experiments, the distilled smaller models are used in isolation without any ensemble or cascade. 
    
    \item 
    \textit{\textbf{Online Ensemble Learning:}} We employ all available models in an ensemble with learned predetermined probabilities. The smaller models are also continuously updated based on LLM annotations. This serves as an ablation of our method by excluding the deferral policy learning component.
\end{itemize}

\section{Experimental Results}
\subsection{Overall Performance and Cost Trade-offs}
\paragraph{IMDB} The results presented in Table \ref{tab:main}, Figure \ref{fig:tradeoff_gpt} and \ref{fig:tradeoff_llama} demonstrate that our proposed online cascade learning system can consistently achieve higher accuracies compared to knowledge distillation and online ensemble learning baselines on the IMDB dataset across different cost budgets, regardless of whether using GPT-3.5 Turbo or Llama 2 70B Chat in the cascade. Notably, Figure \ref{fig:tradeoff_llama} highlights our system's ability to closely rival the performance of Llama 2 70B Chat while achieving a 60\% reduction in inference costs (\ie calling LLM $\sim$5200 times in processing a total of 12500 queries). This effectively demonstrates the system's efficiency in balancing cost with performance.

\paragraph{HateSpeech} The results on the HateSpeech dataset further reveal the strengths of our online cascade learning system in handling datasets with significant class imbalance. Most models may face a trade-off between accuracy and recall due to the imbalanced nature of HateSpeech. However, the accuracy-cost and recall-cost trade-offs, respectively depicted in the upper and lower subplots of Figure \ref{fig:tradeoff_gpt} and \ref{fig:tradeoff_llama}, demonstrate that our system effectively improves recall with minimal impact on accuracy as the cost budget increases. Although the recall rate of online cascade learning is marginally lower than the baselines under certain budgets, it can achieve a better balance between recall and precision, as evidenced by its consistently higher F1 scores in Appendix Figure \ref{fig:f1_gpt}. In particular, as demonstrated in Table \ref{tab:main} where Llama 2 70B Chat is the LLM and the cost budget $\mathcal{N}=4900$, our system even outperforms the LLM with a similar recall (ours: 82.03\% vs. LLM: 82.19\%) and a better accuracy (78.32\% vs. 77.81\%), underlining its effectiveness in handling imbalanced data streams.

\paragraph{ISEAR} On the ISEAR benchmark, our online cascade learning system also effectively balances cost and accuracy in complex multi-class classification. As indicated in Figure \ref{fig:tradeoff_gpt} and \ref{fig:tradeoff_llama}, the system's performance gradually aligns with that of GPT-3.5 Turbo and even surpasses Llama 2 70B Chat as the cost budget increases. This success underscores the advantages of smaller models in adapting to complex classifications by learning from the LLM annotations, enabling them to potentially outshine zero-shot LLMs. Moreover, the notable performance gap between online ensemble learning and online cascade learning also confirms the benefit of co-optimizing model learning with deferral policy learning for optimal cost-performance equilibrium.

\paragraph{FEVER} FEVER is a significantly more complex dataset compared to the previous benchmarks. It demands models to reason over the statements and validate their factuality based on parametric knowledge. Therefore, small models of limited capacities, such as logistic regression, struggle to perform effectively on FEVER even after several iterations of update, as evident in Table \ref{tab:main} where distilled LR can perform only slightly better than random guess (\ie 50\%). Recognizing these limitations, our online cascade learning system smartly adapts by prioritizing more capable models, such as BERT-base and the LLM, for processing most of the queries, leading to a favorable accuracy-cost trade-off. Remarkably, when using Llama 2 70B Chat as the LLM with a cost budget of $\mathcal{N}=2800$, our system slightly outperforms the LLM in accuracy (77.73\% vs. 77.15\%), showcasing the system's proficiency at navigating intricate reasoning tasks with enhanced cost-efficiency.

\subsection{Case Analysis}
To examine our approach's online learning process more closely, we run the online cascade learning at specific cost budgets and conduct a detailed case analysis.

\paragraph{IMDB} Figure \ref{fig:imdb_exp} illustrates the online cascade system performance at a specific cost budget ($\mathcal{N}=3671$) throughout the inference of the IMDB dataset. Initially, for the first 160 samples, all queries are processed exclusively by the LLM, as indicated by the stacked plot in the background. However, with the arrival of more queries over time, both the logistic regression and BERT-base models, denoted by the green and orange dashed lines, dynamically improve by learning from the GPT-3.5 Turbo annotations and increasingly contribute to query processing. When the incoming number of samples approaches 4000, most queries are processed by BERT-base, leaving only 30\% of the queries deferred to GPT-3.5 Turbo. Meanwhile, the system can achieve an overall accuracy consistently close to or even slightly higher than the LLM (marked by the blue dashed line), which affirms the effectiveness of online cascade learning in saving inference costs with minimum performance degradation. 

\begin{figure}[t]
     \centering
     \includegraphics[width=\columnwidth]{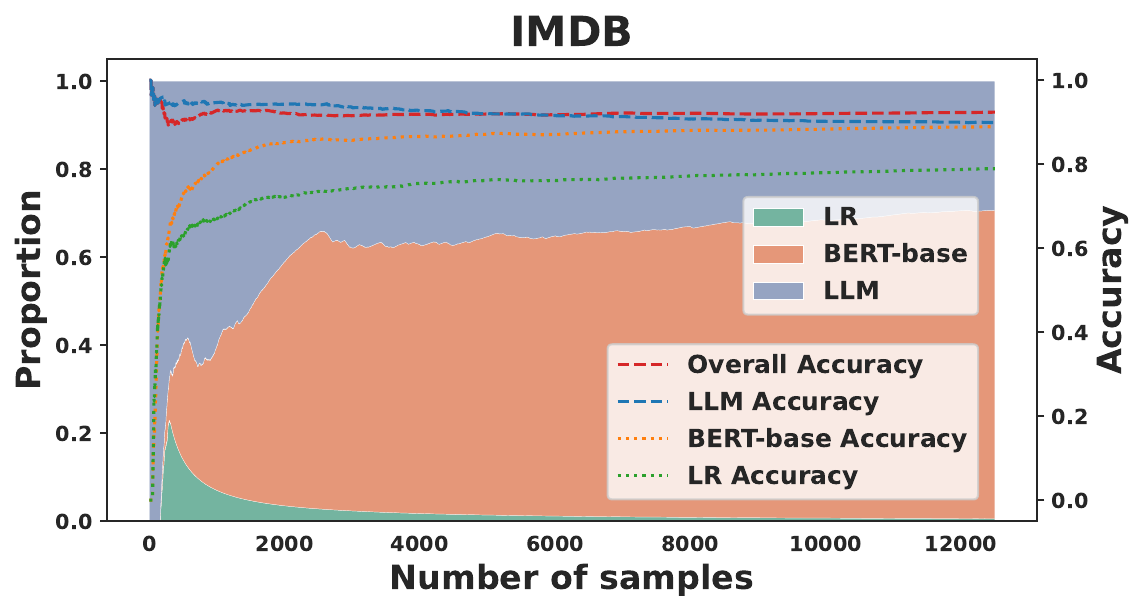}
     \caption{Inference results on IMDB when $\mathcal{N}=3671$. Online cascade learning system performs similarly to GPT-3.5 Turbo while saving $\sim$70\% of the inference costs.} 
     \label{fig:imdb_exp}
\end{figure}

\begin{figure}[t]
     \centering
     \includegraphics[width=\columnwidth]{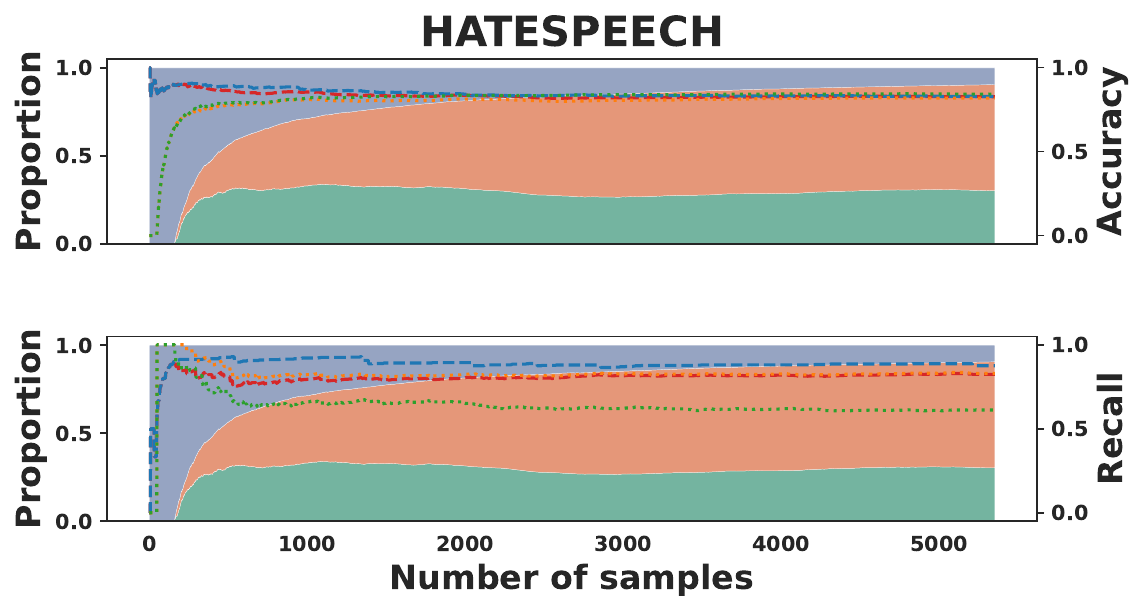}
     \caption{Inference results on HateSpeech when $\mathcal{N}=507$. Online cascade learning system performs similarly to GPT-3.5 Turbo while saving $\sim$90\% of the inference costs.} 
     \label{fig:hatespeech_exp}
\end{figure}

\begin{figure}[t]
     \centering
     \includegraphics[width=\columnwidth]{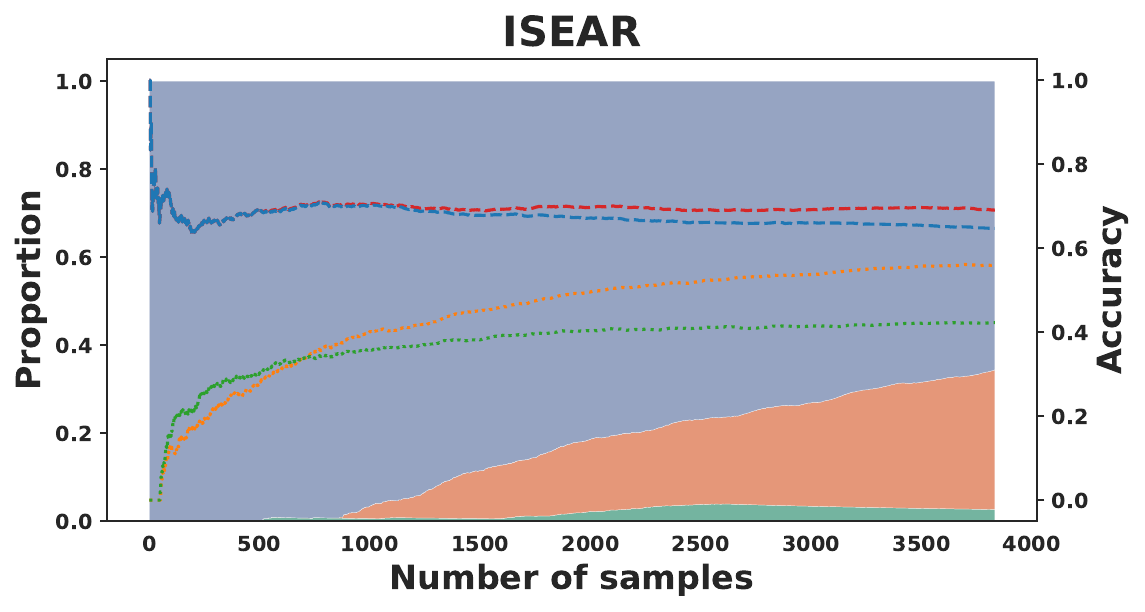}
     \caption{Inference results on ISEAR when $\mathcal{N}=2517$. Online cascade learning system performs very close to GPT-3.5 Turbo while saving $\sim$30\% of the inference costs.} 
     \label{fig:isear_exp}
\end{figure}

\begin{figure}[t]
     \centering
     \includegraphics[width=\columnwidth]{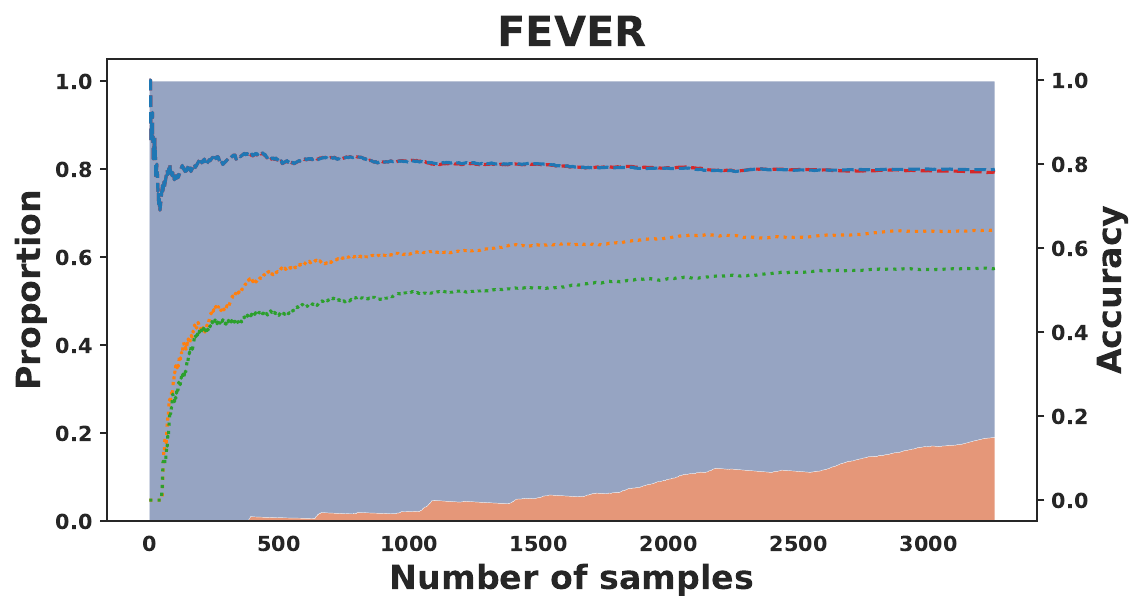}
     \caption{Inference results on FEVER when $\mathcal{N}=2635$. Online cascade learning system performs similarly to GPT-3.5 Turbo while saving $\sim$20\% of the inference costs. } 
     \label{fig:fever_exp}
\end{figure}

\begin{figure*}[t!]
    \centering
{\includegraphics[height=101pt]{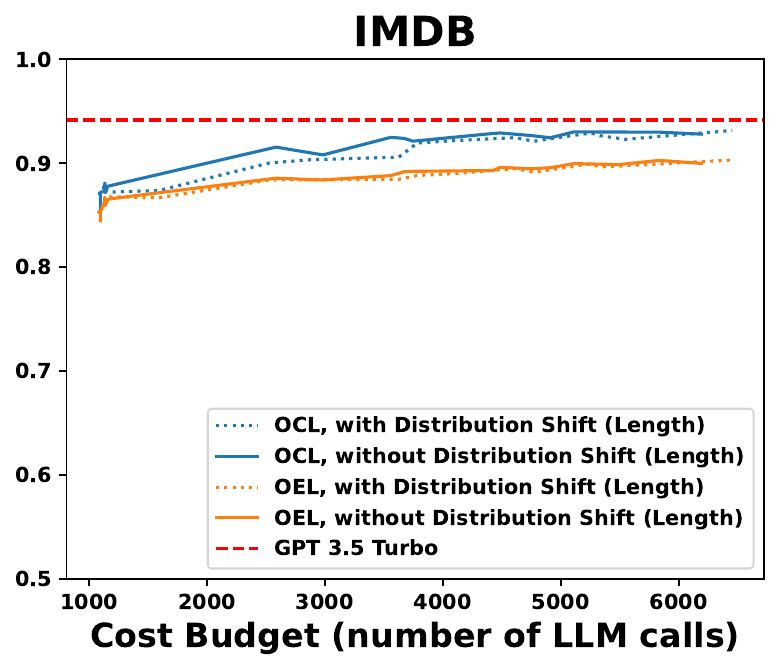}}
{\includegraphics[height=101pt]{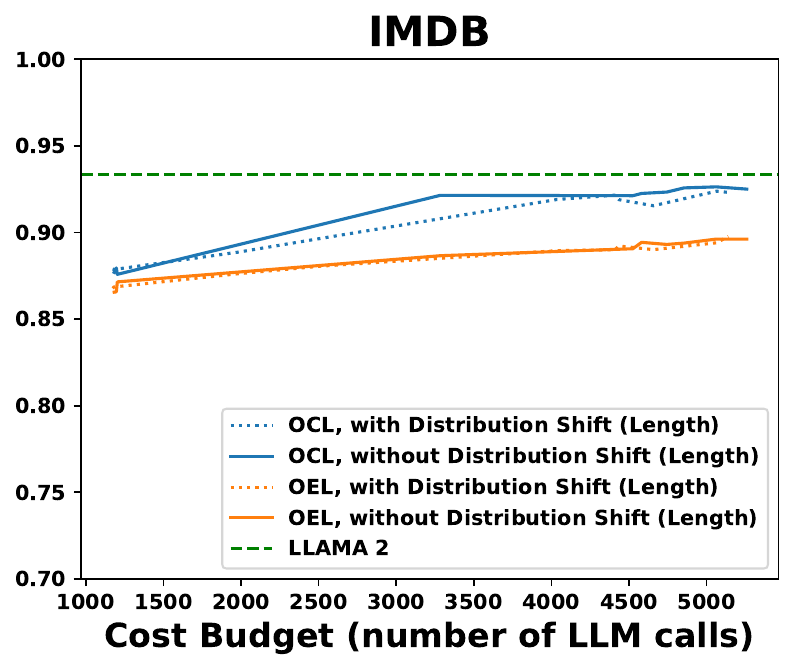}}
{\includegraphics[height=101pt]{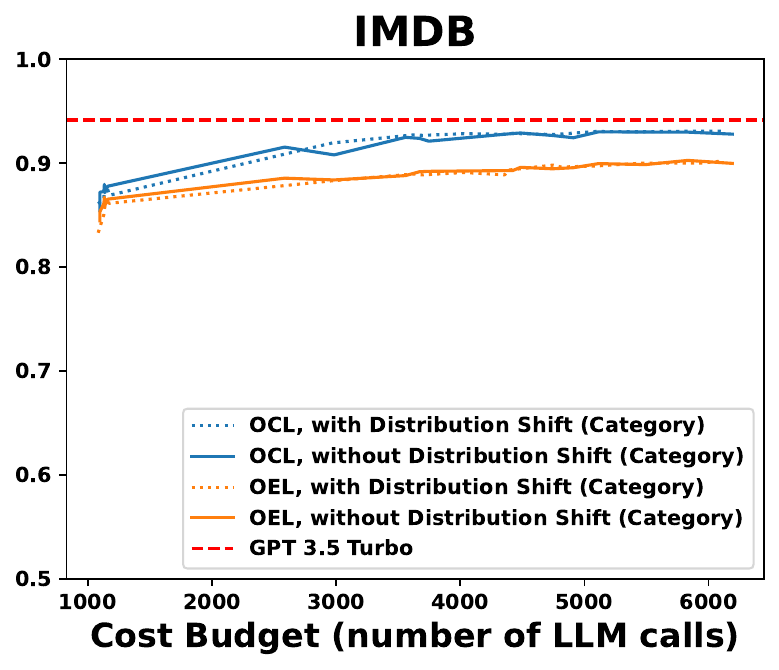}} 
{\includegraphics[height=101pt]{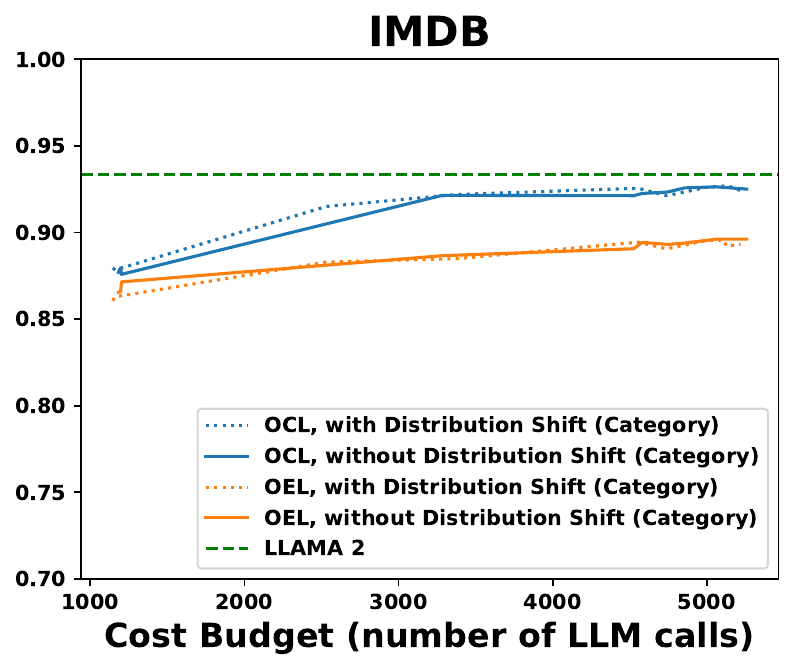}} 

    \caption{Cost-accuracy trade-off curves on two input distributional shift scenarios, respectively using GPT-3.5 Turbo and Llama 2 70B Chat as the LLM with logistic regression and BERT-base in a cascade. ``OCL'' refers to online cascade learning, and ``OEL'' means online ensemble learning. }
    \label{fig:shift}
\end{figure*}

\paragraph{HateSpeech} Similarly, on HateSpeech dataset, when $\mathcal{N}=507$, as shown in Figure \ref{fig:hatespeech_exp}, as the number of samples approaches 5000, 30.31\% of the queries are handled by logistic regression, while 60.24\% are handled by BERT-base, which altogether cut down the LLM inference costs by more than 90\%. At the same time, the online cascade system can still achieve an overall accuracy of 82.66\% and recall of 82.36\%, which aligns closely with the performance of GPT-3.5 Turbo (Accuracy: 83.34\%, Recall: 83.28\%)

\paragraph{ISEAR} Our analysis of the ISEAR dataset under a specific cost budget ($\mathcal{N}=2517$) also confirms the dynamic adaptability and cost-saving features of our online cascade system. As the number of processed samples increases, the system demonstrates an impressive ability to gradually shift query processing from the expensive GPT-3.5 Turbo to the more economical models. This transition is evident in the increasing proportions of queries handled by BERT-base over time, as shown in Figure \ref{fig:isear_exp}. Most importantly, since the proportion of queries handled by BERT-base is not yet converged, the whole system can further cut inference costs while sustaining high accuracy as more samples come in.

\paragraph{FEVER} On the FEVER dataset, our system's performance at a cost budget ($\mathcal{N}=3671$) further validates its usefulness even under complex reasoning task settings. The system's learning curve, as depicted in Figure \ref{fig:fever_exp}, illustrates a steady increase in the number of queries processed by the BERT-base models. Given logistic regression's limited capability in fact-checking, our online cascade learning system smartly shifts its reliance towards more capable models, such as BERT-base and the larger LLMs. The overall performance in terms of accuracy aligns closely with that of the GPT-3.5 Turbo, achieving significant cost savings (17\%) without compromising on the quality of the results.

\subsection{Adaptability to Larger Cascade}
\label{sec:larger}

To further validate the adaptability of our online cascade learning system, we have also explored scaling up the cascade system by further integrating a BERT-large model (4 models in total). The results of this integration are encouraging, detailed in Appendix Figure \ref{fig:large_tradeoff_gpt}. The expanded cascade system, particularly when trained on Llama 2 70B Chat annotations, demonstrates equivalent or even superior results compared to the standalone LLM in most cases, with the exception of the HateSpeech dataset, a simpler task setting where a larger cascade might complicate the deferral policy learning and thus degrading overall performance. Therefore, it is important to align cascade size with task complexity to optimize performance and avoid overfitting. A more detailed analysis is in Appendix \ref{app:larger}

The success of this larger cascade system signals the huge potential in our work's extensions. It showcases our system's inherent flexibility and scalability, enabling it to accommodate and efficiently utilize more powerful models as part of its architecture. Future investigations on online cascade systems may further scale up the system by incorporating LLMs of different sizes and specialties, allowing for refined tailoring of responses to the specific characteristics of different queries. However, there are also several factors worth noting when scaling up the cascade, as outlined in our analysis in Appendix \ref{app:equi} and \ref{app:llm}.

\subsection{Robustness against Distributional Shifts}

\paragraph{Distribution Shift in Input Length} 
Longer inputs typically involve more complex semantics. For example, on IMDB, the average accuracy of GPT-3.5 Turbo is notably lower on longer movie reviews, as evidenced in Appendix Table \ref{tab:length}.
Therefore, to verify our method's robustness against input distribution shifts, we rearrange the IMDB benchmark with length ascending order to simulate a distribution shift over the inputs’ semantic complexity. The results are shown in Figure \ref{fig:shift} and Table \ref{tab:shift}. Despite the distribution shift in input length, online cascade learning demonstrates good performance with minimal accuracy drops across different cost budgets, regardless of the LLM adopted in the cascade.

\paragraph{Distribution Shift in Input Category} 
We further validate our approach with a distribution shift in input semantic categories, by filtering all the input samples regarding “\texttt{Comedy}” movies in the IMDB dataset (8,140 out of 25,000), and feeding them into the cascade as the last part of the input stream. This means that the system had not seen any comedy movie reviews in the first 2/3 of the inputs before processing comedy movie reviews in the last 1/3 of the inputs.
As shown in Figure \ref{fig:shift} and Table \ref{tab:shift}, our approach also performs reliably under the distribution shift in input categories, with a slight increase in average accuracy.

Based on the results, we conclude that our proposed approach can well utilize the advantages of online learning, quickly adapt to unseen inputs, and perform robustly against distribution shifts in the data streams.

\begin{table}[t]
    \centering
    \resizebox{\columnwidth}{!}{%
    \begin{tabular}{lcc}
    \toprule
    \textbf{} & \textbf{GPT-3.5 Turbo} &  \textbf{Llama 2 70B} \\
    \midrule
    Without Any Distribution Shift & 90.77\% & 90.97\% \\
    \midrule
    With Distribution Shift in Input Length & 90.23\% & 90.64\% \\
    $\Rightarrow$ Difference & -0.54\% &-0.33\% \\
    \midrule
    With Distribution Shift in Input Category & 90.85\% &  91.46\% \\
    $\Rightarrow$ Difference & +0.08\% & +0.49\% \\
    \bottomrule
    
    \end{tabular}
    }
    \caption{Average accuracy of our approach across different cost budgets with distribution shifts in input length or input category, compared to the default setting.}
    \label{tab:shift}
\end{table}
\section{Related Work}

\subsection{Knowledge Distillation}
Knowledge distillation, originally conceptualized by \citet{hinton2015distilling}, emerged as a technique to transfer knowledge from a large, complex model (teacher) to a smaller, more efficient one (student), intending to retain performance while reducing computational costs. Notable advancements include the works of \citet{sanh2019distilbert}, who demonstrated the effectiveness of distilling the capabilities of BERT into smaller models, and \citet{gu2023knowledge} who successfully applied distillation to LLMs, achieving comparable performance with significantly reduced model sizes. However, the significant difference in capabilities between the teacher and student model can lead to challenges, particularly when dealing with complex queries that require advanced reasoning or involve intricate subject matter \cite{cho2019efficacy, zhang2022lifting}, thereby highlighting a performance gap that distillation alone cannot overcome \cite{rawat2021doubt}. 

\subsection{Boosting LLMs with Small Models}
Recent advancements in combining LLMs with smaller models have suggested promising avenues for improving model performance and efficiency. For classification tasks, SuperICL improves LLMs' performance by utilizing the predictions of smaller models to enrich the prompting context \cite{xu2023small}. For generative tasks, speculative decoding techniques \cite{leviathan2023fast, miao2023specinfer, liu2023online} are proposed to accelerate LLM inference by using smaller language models to predict token sequences. Our work differs from these works in that small models are separately trained as plugins, and the performance of LLMs is prioritized. Instead, we focus on the overall performance of the entire cascade.

\subsection{Model Cascade}
By orchestrating multiple models with varying complexities, model cascade has been widely adopted in both CV and NLP tasks to enhance system efficiency. An early theme in this field is the early exiting from a neural network's intermediate layers \cite{liu2020fastbert, xin2020deebert, schwartz2020right}. These works have later inspired the cascade of complete models \cite{li2020cascadebert, khalili2022babybear}. Among them, \citet{DBLP:conf/emnlp/VarshneyB22} systematically examines the trade-off between accuracy and cost in a cascade of variants of BERT. \citet{frugalgpt} further extends the cascade to cover multiple LLM APIs by incorporating a scoring function. Compared to the previous works, our work expands the model cascade by allowing small models to learn online and improve rather than having fixed capabilities, similar to the recent idea of ``neural caching'' \cite{ramirez2023cache, stogiannidis2023cache}. Moreover, by formulating the model confidence as part of the learning objective, our work eliminates the need to set confidence thresholds manually.

\section{Conclusion}
In this work, we address the challenge of managing streaming queries with LLMs in a cost-efficient way. We propose an online cascade learning framework that adapts to evolving queries by improving smaller models in the cascade through imitation of LLM behaviors. 
Our theoretical analysis provides a no-regret performance guarantee for the algorithm. Extensive experiments confirmed the effectiveness of our approach, showing that it can achieve performance levels comparable to LLMs while significantly reducing inference costs, with potential savings of up to 90\%. 



\clearpage

\section*{Acknowledgements}
We thank the anonymous reviewers for their valuable comments and suggestions in enhancing the paper. This research was supported by the NSF under grant numbers CCF-1918651, 2008240, 2131294, and 2212557, ARO award \#W911NF-21-1-0009, DARPA award \#HR00112320018, NIH CTSA award No. UL1TR003167, and US DOT Tier-1 UTC CYBER-CARE grant \#69A3552348332.

\section*{Impact Statement}
This work presents an online cascade learning framework for enhancing the efficiency of LLMs in handling streaming queries. While primarily advancing the field of machine learning, it also has broader societal implications. Notably, by significantly reducing computational demands (up to 90\% in inference costs), our framework addresses environmental concerns related to the energy consumption of large-scale computing. This reduction in resource utilization aligns with global efforts towards sustainable technology use.

Furthermore, our work has the potential to democratize access to advanced machine learning technologies. By lowering operational costs, smaller organizations and researchers with limited resources can leverage the power of LLMs, fostering inclusivity and diversity in research and application development. Additionally, our approach could contribute to sectors like healthcare, finance, and public policy, where real-time, cost-effective data processing is crucial. 


\bibliography{ref}
\bibliographystyle{icml2024}

\newpage
\appendix
\onecolumn
\section{Detailed Proofs and Theoretical Analysis}
This appendix section provides detailed proofs and explanations for the theoretical analysis presented in the main paper Section \ref{sec:theory}. 
\subsection*{Preliminaries}
To ensure clarity, we begin by defining the concept of regret in online learning.  
\begin{definition}
\label{def:inj}
Given an online learning algorithm operating over time steps $1, ..., T$, the regret $\gamma$ is defined as the difference between the total loss incurred by the algorithm and the loss of the best fixed policy in hindsight. For online cascade learning, regret $\gamma$ is formally expressed as:
\begin{align}
\gamma&= J(\pi, T) - \min\limits_{\pi \in \Pi} J(\pi, T)\\
&=\sum^{T}_{t=1} \sum^{N}_{i=1} p^{s_{t,i}}_{\pi} C_\pi(s_{t,i}) - \min\limits_{\pi\in\Pi}\sum^{T}_{t=1} \sum^{N}_{i=1} p^{s_{t,i}}_{\pi}C_\pi(s_{t,i})
\end{align}
As the value of $t$ is clear from the context, we abbreviate states $s_{t,i} = \langle x_t, i\rangle$ by $s_i$. We denote the best fixed policy as $\pi^* = \argmin_{\pi \in \Pi}J(\pi, T)$ and $\pi (s_{t,i}, \texttt{defer})$ as $p(\pi, s_i)'$. Furthermore, since the prediction loss is computed as the aggregation over a class probability distribution, we simplify $\sum_{y \in Y} \pi (s_{t,i}, y)  \cdot \mathcal{L}(y | y_t)$ as $\mathcal{L}(a_i|y_t)$ by using $a_i$ to represent the output probability vector, which leads to the following representation of regret:
\begin{align}
\gamma&=\sum^{T}_{t=1} \sum^{N}_{i=1} p^{s_i}_{\pi} C_\pi(s_i) - \sum^{T}_{t=1} \sum^{N}_{i=1} p^{s_i}_{\pi^*}C_{\pi^*}(s_i) \label{eq:regret_short}\\
&= \sum^{T}_{t=1} \sum^{N}_{i=1} \prod^{i-1}_{j=1} p(\pi, s_j)' \cdot \Big( (1-p(\pi, s_i)') \cdot \mathcal{L}(a_i|y_t) + p(\pi, s_i)' \cdot \mu c_{i+1} \Big) \nonumber \\ &\ \ - \sum^{T}_{t=1} \sum^{N}_{i=1} \prod^{i-1}_{j=1} p(\pi^*, s_j)' \cdot \Big( (1-p(\pi^*, s_i)') \cdot \mathcal{L}(a^*_i|y_t) + p(\pi^*, s_i)' \cdot \mu c_{i+1} \Big) .
\label{eq:regret}
\end{align}
\end{definition}

\subsection*{Online Ensemble Learning Analysis}

To contextualize the no-regret analysis of our proposed online cascade learning algorithm, we first analyze a simplified \textit{online ensemble learning} algorithm under the same stream processing setting that comprises the linear combination of a series of classification models $m_1, ..., m_N$, each with a static operating probability  $\sum_{i=1}^{N}w_i=1$. Let us denote the model parameters of $m_i$ at time $t$ by $m_i^t$. Assuming a convex, differentiable cost function $c^t$ for all $t$ that can evaluate $m_i^t$, and bounded, closed, nonempty model spaces $||M_i||$ for all $m_i$, we define $||x||=\sqrt{x\cdot x}$ and $d(x,y)=||x-y||$ to establish:
\begin{align}
||M|| &= \max\limits_{x,y \in M_i, i\in[1,N-1]}d(x,y) \nonumber \\
||\nabla c|| &= \underset{m_i\in M_i, i\in[1,N-1], t\in[1,T]}{\max} ||\nabla c^t(m_i)||. \nonumber
\end{align}
\oel*
\begin{proof} 
\label{app:ensemble}
First, we show that, without loss of generality, for all $t$ there exists a $g^t_i \in \mathbb{R}^n$ such that for all models $m_i \in M_i$, $c^t(m_i) = g^t_i \cdot m_i$. 

By defining $g^t_i = \nabla c^t(m_i)$, because $c^t(m_i)$ is convex, for all $m_i \in M_i$:
\begin{equation}
    c^t(m_i) \geq (\nabla c^t (m^t_i)) \cdot (m_i - m^t_i ) + c^t(m^t_i).
\end{equation}
Set $m^*_i$ to be the best-fixed model in hindsight. Then, because $m^*_i \in M_i: c^t(m^*_i) \geq g^t \cdot (m^*_i - m^t ) + c^t(m^t_i)$. Thus,
\begin{align}
    c^t(m^t_i) - c^t(m^*_i) &\leq c^t(m^t_i) - \Big( g^t_i\cdot (m^*_i - m^t_i) + c^t(m^t_i)\Big) \\
    &\leq g^t_i m^t_i - g^t_i m^*_i
\end{align}
Thus we show that for all $m_i$, $g^t_i m^t_i - g^t_i m^*_i$ is the upper bound of $c^t(m^t_i) - c^t(m^*_i)$: 
\begin{align}
\gamma &=\sum^T_{t=1}\sum^N_{i=1}w_i \cdot c^t(m^t_i) - \min\limits_{m_i\in M^i}\sum^T_{t=1}\sum^N_{i=1}w_i\cdot c^t(m^t_i) \\
&=\sum^T_{t=1}\sum^N_{i=1}w_i\Bigl(c^t(m^t_i) - c^t(m^*_i)\Bigr) \\
&\leq \sum^T_{t=1}\sum^N_{i=1}w_i\Bigl(g^t m^t_i - g^t m_i^*\Bigr).
\label{eq:ensemble_regret_bound}
\end{align}

We define for all $t$, $\hat{m_i}^{t+1} = m_i^t - \eta_t g^t_i$ (gradient descent). Note that $m_i^{t+1} = P(\hat{m_i}^{t+1}) = \argmin\limits_{m_i\in M_i} || m_i - \hat{m_i}^{t+1}|| $ (greedy projection). We will attempt to bound the regret of not playing action $m^*_i$ on round $t$:
\begin{align}
    \hat{m_i}^{t+1} - m^*_i &=  m^t_i - \eta_t g^t_i - m^*_i \\
    (\hat{m_i}^{t+1} - m_i^*)^2 &= (m_i^t - m_i^*)^2 - 2\eta_t (m_i^t - m_i^*) \cdot g^t_i + \eta_t^2 ||g^t_i||^2
\end{align}

Since by definition, for all $\hat{m_i} \in \mathbb{R}^n$, for all $m_i \in M_i$, $(\hat{m_i} - m_i)^2 \geq (P(\hat{m_i}) - m_i)^2$. Also, $||g^t_i|| \leq ||\nabla c||$. So
\begin{align}
    (m^{t+1}_i-m^*_i)^2 &\leq (\hat{m_i}^{t+1} - m^*_i)^2 \\
    &\leq (m^t_i - m^*_i)^2 - 2\eta_t (m^t_i - m^*_i) \cdot g^t_i + \eta_t^2 ||\nabla c||^2,
\end{align} which can be converted into:
\begin{equation}
\label{eq:ensemble_proof}
    g^t_i m^t_i - g^t_i m^*_i \leq \frac{1}{2\eta_t}\Bigl[ (m^t_i - m^*_i)^2 - (m^{t+1}_i-m^*_i)^2\Bigr] + \frac{\eta_t}{2}||\nabla c||^2.
\end{equation}
By summing Equation (\ref{eq:ensemble_regret_bound}) and Equation (\ref{eq:ensemble_proof}) we get:
\begin{align}
    \gamma &\leq \sum^T_{t=1}\sum^N_{i=1}w_i\Bigl(g^t m^t_i - g^t m_i^*) \\
    &\leq \sum^T_{t=1}\sum^N_{i=1}w_i \biggr[ \frac{1}{2\eta_t}\Bigl[ (m^t_i - m^*_i)^2 - (m^{t+1}_i-m^*_i)^2\Bigr] + \frac{\eta_t}{2}||\nabla c||^2  \biggl] \\
    &\leq \sum^N_{i=1} \frac{w_i}{2\eta_t} (m^1_i-m^*_i)^2  - \sum^N_{i=1} \frac{w_i }{2\eta_t} (m^{T+1}_i-m^*_i)^2  \\
    &+ \frac{1}{2} \sum^T_{t=2}\sum^N_{i=1}w_i  (\frac{1}{\eta_t} - \frac{1}{\eta_{t-1}})(m^t_i-m^*_i)^2 + \frac{||\nabla c||^2}{2} \sum^T_{t=1} \sum^N_{i=1}w_i \cdot \eta_t \\
    &\leq \sum^N_{i=1}w_i ||M||^2 \Bigl(\frac{1}{2\eta_1} + \frac{1}{2}\sum^T_{t=2}(\frac{1}{\eta_t} - \frac{1}{\eta_{t-1}})\Bigr) + \frac{||\nabla c||^2}{2} \sum^T_{t=1} \sum^N_{i=1}w_i \cdot \eta_t
\end{align}
Since $ \sum^N_{i=1}w_i = 1$,
\begin{align}
    \gamma &\leq ||M||^2 \Bigl(\frac{1}{2\eta_1} + \frac{1}{2}\sum^T_{t=2}(\frac{1}{\eta_t} - \frac{1}{\eta_{t-1}})\Bigr) + \frac{||\nabla c||^2}{2} \sum^T_{t=1} \eta_t \\
    &\leq ||M||^2 \frac{1}{2\eta_T} + \frac{||\nabla c||^2}{2} \sum^T_{t=1} \eta_t 
\label{eq:ensemble_last}
\end{align}
Now, if we define $\eta_t = \frac{1}{\sqrt{t}}$, then
\begin{align}
    \sum^T_{t=1} \eta_t &= \sum^T_{t=1} \frac{1}{\sqrt{t}} \\
    &\leq 1 + \int^T_{t=1} \frac{dt}{\sqrt{t}} \\
    &\leq 1 + [2\sqrt{t}]^T_{1} \\
    &\leq 2\sqrt{T} - 1
\end{align}
Plugging this into Equation (\ref{eq:ensemble_last}) yields
\begin{align}
    \gamma &\leq \frac{||M||^2\sqrt{T}}{2} + \frac{||\nabla c||^2}{2}(2\sqrt{T}-1) \\
    &\leq \frac{||M||^2\sqrt{T}}{2} + (\sqrt{T}-\frac{1}{2}) ||\nabla c||^2
\end{align} \\
Therefore, when the number of iterations $T$ approaches infinity, the average regret $\lim_{T \rightarrow \infty} \gamma/T \leq 0$.
\end{proof}

\subsection*{Extension to Online Cascade Learning}
The above theorem establishes that the online ensemble learning algorithm achieves no regret as $T \rightarrow \infty$. Transitioning to online cascade learning, we extend this analysis by substituting the fixed model probabilities $w_i$ in Equation (\ref{eq:ensemble_regret}) with dynamic probabilities influenced by preceding model actions (\ie $p^{s_i}_{\pi}=\prod^{i-1}_{j=1} p(\pi, s_j)'$) in Equation (\ref{eq:regret}). Thus, we now analyze the convergence of the deferral policies $p^{s_i}_{\pi}$.

\begin{lemma}
\label{proof:lemma}
For online cascade learning, for all $i\in [1,N]$, $p^{s_i}_{\pi}-p^{s_i}_{\pi^*} \xrightarrow[]{T\rightarrow\infty} 0$.
\end{lemma}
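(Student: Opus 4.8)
The goal is to show that the deferral‑induced reachability probabilities converge: for every cascade level $i$, the gap $p^{s_i}_{\pi} - p^{s_i}_{\pi^*}$ tends to $0$ as $T \rightarrow \infty$. The plan is to exploit the recursive product structure $p^{s_i}_{\pi} = \prod_{j=1}^{i-1} p(\pi, s_j)'$ and argue by induction on $i$. The base case $i=1$ is immediate, since $p^{s_1}_{\pi} = p^{s_1}_{\pi^*} = 1$ by definition regardless of the policy. For the inductive step, I would write the telescoping identity relating the product up to level $i$ with the product up to level $i-1$, namely $p^{s_i}_{\pi} = p^{s_{i-1}}_{\pi}\cdot p(\pi, s_{i-1})'$, and similarly for $\pi^*$, so that the difference decomposes as
\begin{align}
p^{s_i}_{\pi} - p^{s_i}_{\pi^*} &= p^{s_{i-1}}_{\pi}\bigl(p(\pi, s_{i-1})' - p(\pi^*, s_{i-1})'\bigr) \nonumber \\ &\quad + p(\pi^*, s_{i-1})'\bigl(p^{s_{i-1}}_{\pi} - p^{s_{i-1}}_{\pi^*}\bigr). \nonumber
\end{align}
Since every probability factor is bounded in $[0,1]$, each of the two summands is controlled: the second term vanishes by the inductive hypothesis, and the first is bounded in magnitude by the single‑level gap $|p(\pi, s_{i-1})' - p(\pi^*, s_{i-1})'|$. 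Hence the whole claim reduces to establishing convergence of the per‑level deferral probabilities $p(\pi, s_j)' \rightarrow p(\pi^*, s_j)'$.

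The heart of the argument is therefore the convergence of each deferral function toward its optimal counterpart. Here I would invoke the no‑regret property established for the deferral functions $f_1, \dots, f_{N-1}$ under online gradient descent: because the per‑level loss (the relevant summand of $J(\pi,t)$ in Equation~(\ref{eq:loss})) is convex and differentiable in the deferral parameters, and the parameter space is bounded, closed, and nonempty, the same Zinkevich‑style analysis underpinning Theorem~\ref{proof:ensemble} applies. The no‑regret guarantee implies that the time‑averaged suboptimality of the deferral decisions vanishes; combined with the uniqueness (or, in the non‑unique case, the cost‑equivalence) of the optimal deferral probabilities $p(\pi^*, s_j)'$ at each level, this forces $p(\pi, s_j)' - p(\pi^*, s_j)' \rightarrow 0$ in the appropriate averaged sense.

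The main obstacle I anticipate is precisely making this last implication rigorous: no‑regret controls \emph{average cost}, not the \emph{iterates} themselves, so converting a vanishing regret bound into pointwise convergence of the deferral probabilities requires an additional ingredient. One clean route is to argue that the cost as a function of the deferral probability at each level is strongly convex (or at least has a unique, well‑separated minimizer given the fixed downstream models), so that a small optimality gap in cost translates to a small gap in the probability itself via a quadratic‑growth bound. A subtler point is the coupling across levels: the optimal deferral at level $j$ depends on the quality of the downstream models $m_{j+1}, \dots, m_N$, which are themselves still being learned. I would handle this by appealing to the already‑established convergence of the classification models (via Theorem~\ref{proof:ensemble} and the OGD updates on $\mathcal{D}$), so that the downstream cost landscape stabilizes and the deferral optimization sees an asymptotically fixed target. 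With these pieces in place, the induction closes and the lemma follows.
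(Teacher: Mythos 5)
Your proposal follows essentially the same skeleton as the paper's proof: an induction on the cascade level using the product decomposition $p^{s_i}_{\pi}=\prod_{j=1}^{i-1}p(\pi,s_j)'$, reducing everything to convergence of the per-level deferral probabilities. Where you differ is in how that per-level convergence is justified. The paper first argues (via OGD convergence for convex losses) that each model's prediction loss $\mathcal{L}(a_i|y_t)$ settles to a limiting value $\epsilon_i$, then observes that the per-level objective is \emph{affine} in $p(\pi,s_i)'$ with slope $\mu c_{i+1}-\epsilon_i$, so the optimal deferral rule is the bang-bang indicator $p(\pi^*,s_i)'=\mathbbm{1}[\mu c_{i+1}-\epsilon_i\le 0]$, and finally asserts pointwise convergence of the learned deferral probability to that indicator. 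Your route via a Zinkevich-style no-regret bound on the deferral functions plus a quadratic-growth argument is in the same spirit, and you are right — and more candid than the paper — that converting a vanishing regret into convergence of the iterates is the real crux. One correction, though: the strong-convexity mechanism you lean on does not hold here, because the cost is linear in the deferral probability; what saves the argument is exactly your fallback — for a nonzero slope the minimizer over $[0,1]$ is a unique vertex and the suboptimality gap scales linearly with $|p-p^*|$, so a vanishing cost gap still forces the probability gap to vanish (modulo the degenerate case $\mu c_{i+1}=\epsilon_i$, which neither you nor the paper addresses). Your treatment of the cross-level coupling (letting the downstream models converge first so the deferral target stabilizes) matches the paper's use of the $\epsilon_i$ limits. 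Net: same approach, with your version making explicit a gap the paper papers over, but misdiagnosing the curvature structure that closes it.
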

\begin{proof}
\label{app:lemma}
We begin by revisiting Equation (\ref{eq:regret}), transforming its first term to facilitate our analysis:
\begin{align}
&\ \ \ \ \ \sum^{T}_{t=1} \sum^{N}_{i=1} \prod^{i-1}_{j=1} p(\pi, s_j)' \cdot \Big( (1-p(\pi, s_i)') \cdot \mathcal{L}(a_i|y_t) + p(\pi, s_i)' \cdot \mu c_{i+1} \Big) \\ 
&=  \sum^{T}_{t=1} \sum^{N}_{i=1} \prod^{i-1}_{j=1}p(\pi, s_j)' \cdot \Big(  p(\pi, s_i)' \big(\mu c_{i+1} - \mathcal{L}(a_i|y_t)) + \mathcal{L}(a_i|y_t)\Big).
\label{transformed_regret}
\end{align}
Given that $p(\pi, s_i)'$ lies within the range $(0, 1)$ for all $t$ and $i$, the coefficient preceding $\mathcal{L}(a_i|y_t)$, namely $\prod^{i-1}_{j=1} p(\pi, s_j)' (1-p(\pi, s_i)')$, also falls within $(0, 1)$. Assuming $\mathcal{L}$ is convex, and following the convergence arguments in \citet{li2019convergence}, we establish that as $T \rightarrow \infty$, $\mathcal{L}(a_i|y_t)$ converges to a minimal loss value $\epsilon_i$ for each model.

Integrating this convergence into our transformed regret expression of Equation (\ref{transformed_regret}), we arrive at:
\begin{equation}
    \sum^{T}_{t=1} \sum^{N}_{i=1} \prod^{i-1}_{j=1}p(\pi, s_j)' \cdot \Big(  p(\pi, s_i)' \big(\mu c_{i+1} - \epsilon_i) + \epsilon_i\Big).
    \label{objective}
\end{equation}
As discussed in \citet{confidence} Proposition 3.1, to minimize the aggregated costs, the optimal deferral rule $p(\pi^*, s_i)'$ should be:
\begin{equation*}
        p(\pi^*, s_i)'=\begin{cases}
    0& \text{if $\mu c_{i+1} - \epsilon_i > 0$},\\
    1& \text{otherwise.}
  \end{cases}
\end{equation*}

With gradient descent, as $T\rightarrow \infty$, we can show that for all $i$, $p(\pi, s_i)'$ gradually approaches $p(\pi^*, s_i)'=\mathbbm{1}[\mu c_{i+1} - \epsilon_i \leq 0 ]$ by pointwise convergence. Therefore, in the base case of $N=2$, we have $p^{s_1}_{\pi}-p^{s_1}_{\pi^*} = p(\pi, s_1)' - p(\pi^*, s_1)' = 0, p^{s_2}_{\pi}-p^{s_2}_{\pi^*} = 1-1=0$.

Assuming the lemma holds for $N=k$, \ie $p^{s_i}_{\pi} - p^{s_i}_{\pi^*} = 0$ for all $i \in [1, k]$, we extend this to $N=k+1$:
\begin{align}
    &p(\pi, s_{k+1})' - p(\pi^*, s_{k+1})' \\ 
    &= \prod^{k}_{j=1} p(\pi, s_j)' - \prod^{k}_{j=1} p(\pi^*, s_j)' \\
    &= p^{s_k}_{\pi} \cdot p(\pi, s_k)' - p^{s_k}_{\pi^*} \cdot p(\pi^*, s_k)' \\
    &= p^{s_k}_{\pi^*} \Big( p(\pi, s_k)' - p(\pi^*, s_k)' \Big) = 0
\end{align}

Hence by mathematical induction, for online cascade learning, when $T \rightarrow \infty$, for all $i\in [1,N]$, $p^{s_i}_{\pi}-p^{s_i}_{\pi^*} = 0$.
\end{proof}
With the conclusions of Theorem \ref{proof:ensemble} and Lemma \ref{proof:lemma}, we can now analyze the performance of our proposed online cascade learning algorithm.
\ocl*
\begin{proof}
\label{app:cascade}
Leveraging the findings of Lemma \ref{proof:lemma}, assuming $\mathcal{L}$ is a convex function and $T\rightarrow \infty$, we can now reformulate the regret expression in Equation (\ref{eq:regret_short}) as:

\begin{align}
\gamma&=\sum^{T}_{t=1} \sum^{N}_{i=1} p^{s_i}_{\pi} C_\pi(s_i) - \sum^{T}_{t=1} \sum^{N}_{i=1} p^{s_i}_{\pi^*}C_{\pi^*}(s_i) \\
&=\sum^{T}_{t=1} \sum^{N}_{i=1} p^{s_i}_{\pi^*} C_\pi(s_i) - \sum^{T}_{t=1} \sum^{N}_{i=1} p^{s_i}_{\pi^*}C_{\pi^*}(s_i)
\label{eq:regret_final}
\end{align}

The transformation of the regret expression to Equation (\ref{eq:regret_final}) is significant as it bridges our understanding of the regret in online cascade learning with the established results from online ensemble learning. Specifically, by treating $p^{s_i}_{\pi^*}$ as analogous to the fixed model probabilities $w_i$ and $C_{\pi}(s_i)$ as equivalent to the costs $c^t(m_i^t)$ in the ensemble learning context, we establish a parallel between the two regret formulations.

Therefore, following the conclusion of Theorem \ref{proof:ensemble}, we can infer that, with a learning rate $\eta_t = t^{-1/2}$, the online cascade learning algorithm is guaranteed can achieve no regret, \ie $\lim_{T\rightarrow\infty} \gamma/T \leq 0$. 

\end{proof}

\section{Detailed Experimental Setups}
\subsection{A Simple Prefill Experiment} \label{sec:prefill}

To determine time required to process a document-plus-prompt in a realistic scenario, we performed the following, simple experiment. Using the 65B parameter LLaMA model \cite{touvron2023llama1} and PyTorch 2.1.2, on an Amazon Web Services `\texttt{m6in.16xlarge}' machine with eight, A100 GPUs, we prepare a sequence of 10 prompts each consisting of 8192 tokens, and perform ``first token'' inference on each in sequence. That is, we process the prompt and obtain the first output token, and record the total time taken, which is 36.2 seconds, for an average of 3.6 seconds per prompt. 

The reason we perform first-token inference is that relative to the first output token---which requires a quadratic all-to-all attention computation---subsequent tokens are relatively costless, taking a fraction of a second. For our application, where the full response sequence is expected to be short, first token inference is by far the most costly step.  Note that in our experiment, inference is performed separately on each prompt, rather than as a batch. This is necessary to avoid out-of-memory errors (there is not enough GPU memory to process more than one prompt at a time, due to the memory requirements of the all-to-all attention computation).

\subsection{LLM Prompts}
\label{sec:prompt}
\textbf{IMDB} \\
\textit{System Prompt}: You are a helpful, respectful and honest assistant. The user has given you a movie review to help them make their decision. You should read the review and tell the user whether the review overall shows a positive or negative sentiment towards the movie. Return the answer in one word. \\
\textit{User Prompt}: Here is the movie review: \texttt{\{REVIEW\}} \textbackslash\textbackslash Tell me whether the above review overall shows a positive or negative sentiment towards the movie. Return the answer in one word.

\textbf{HateSpeech} \\
\textit{System Prompt}: You are given a post from an online forum and you need to check whether the post contains any hate speech. Return your answer in one word (yes or no) without any explanations.  \\
\textit{User Prompt}: Post: \texttt{\{POST\}}

\begin{figure*}[t!]
    \centering
\includegraphics[height=130pt]{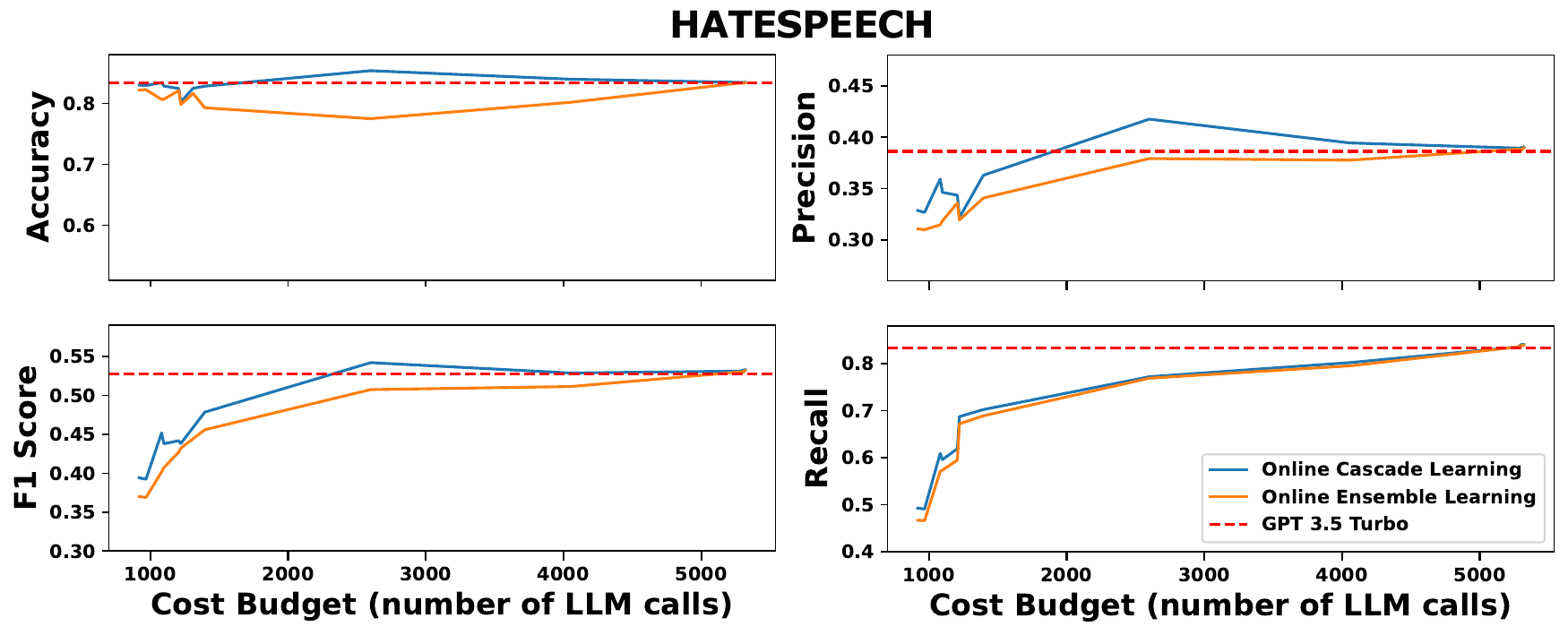}
\includegraphics[height=130pt]{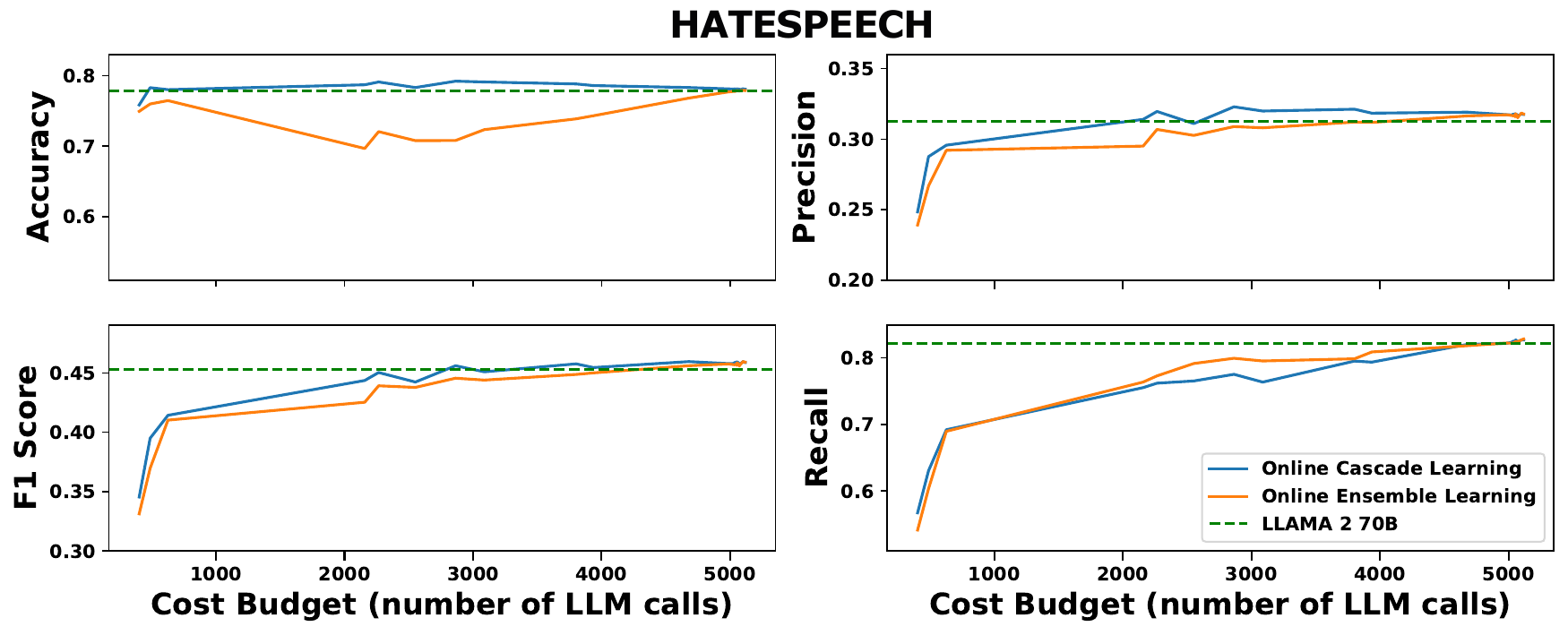}
    \caption{A more detailed cost-performance trade-off plot with accuracy, F1-score, recall, and precision curves, respectively using GPT-3.5 Turbo and Llama 2 70B Chat as the LLM.}
    \label{fig:f1_gpt}
\end{figure*}

\begin{figure*}[t]
    \centering
{\includegraphics[height=95pt]{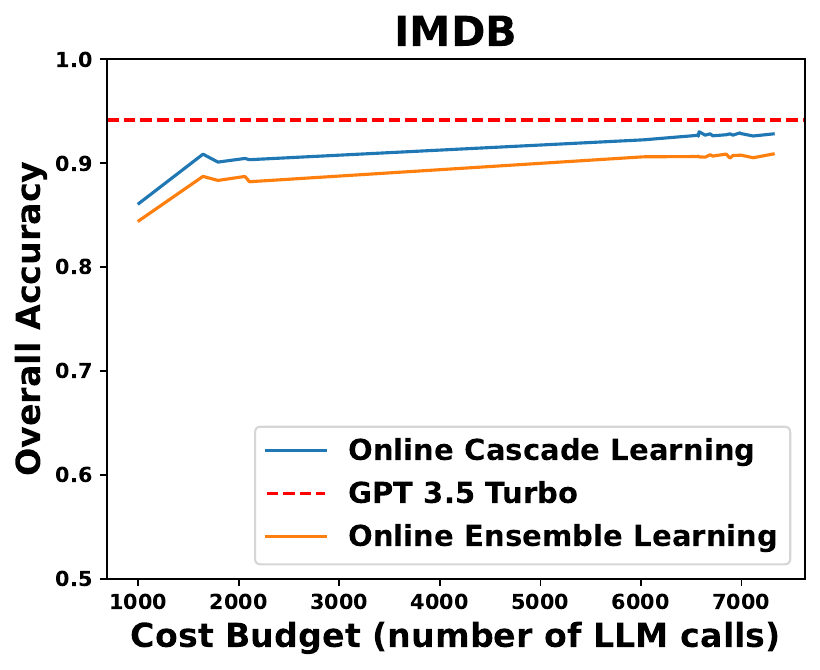}}
{\includegraphics[height=95pt]{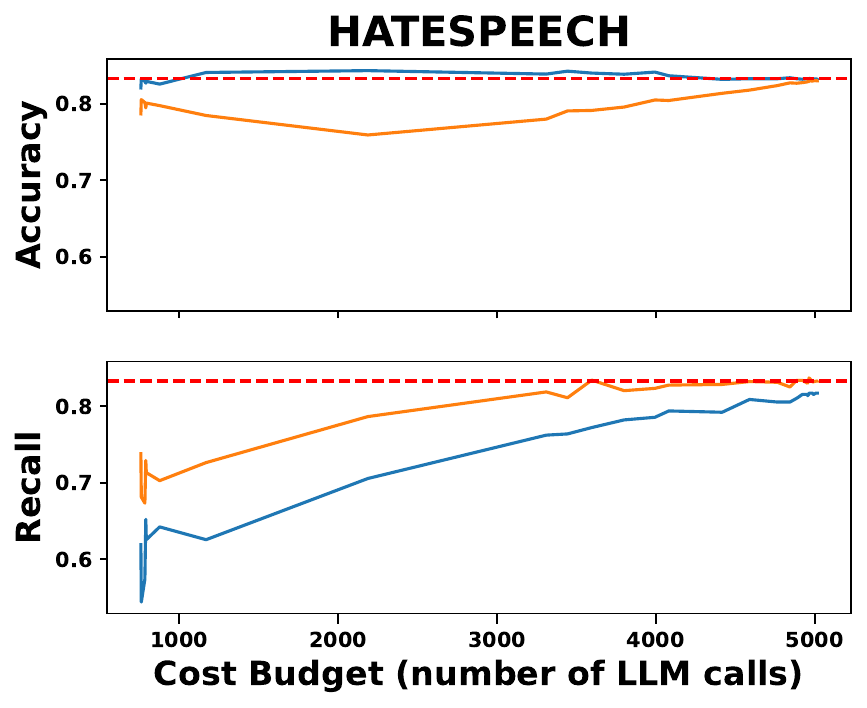}}
{\includegraphics[height=95pt]{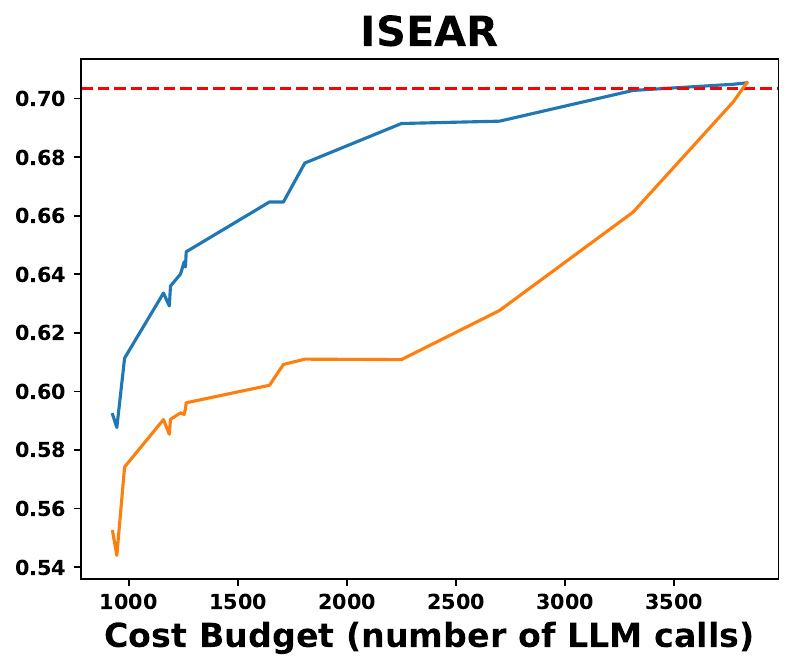}} 
{\includegraphics[height=95pt]{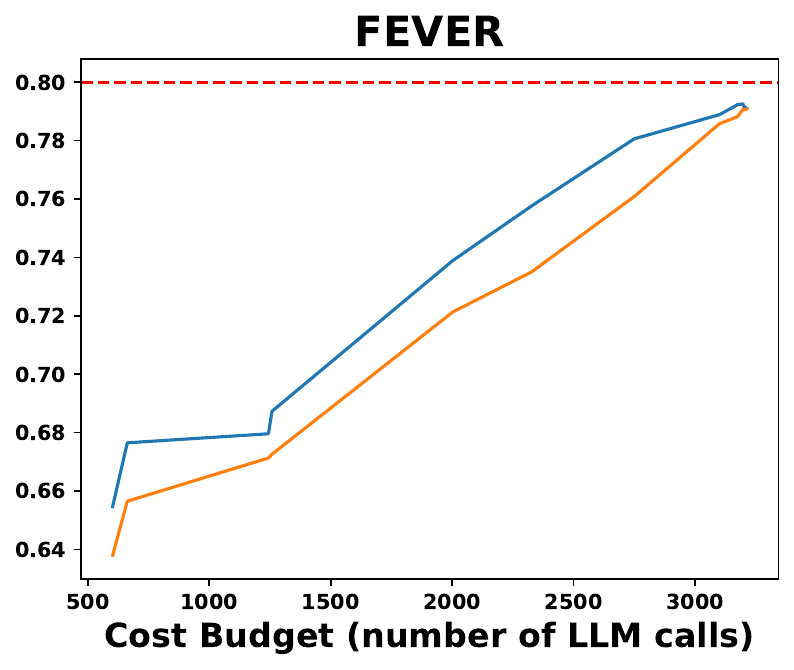}} 

{\includegraphics[height=95pt]{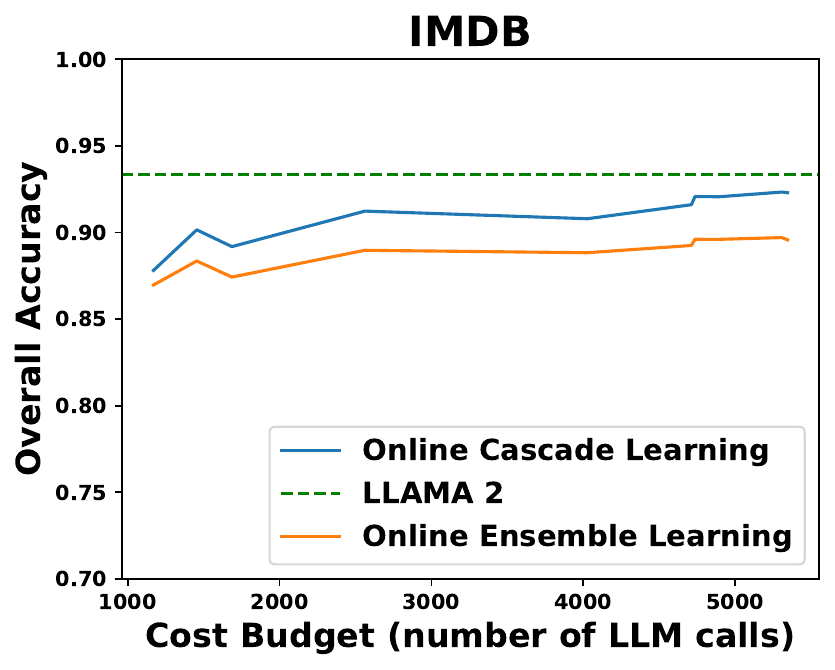}}
{\includegraphics[height=95pt]{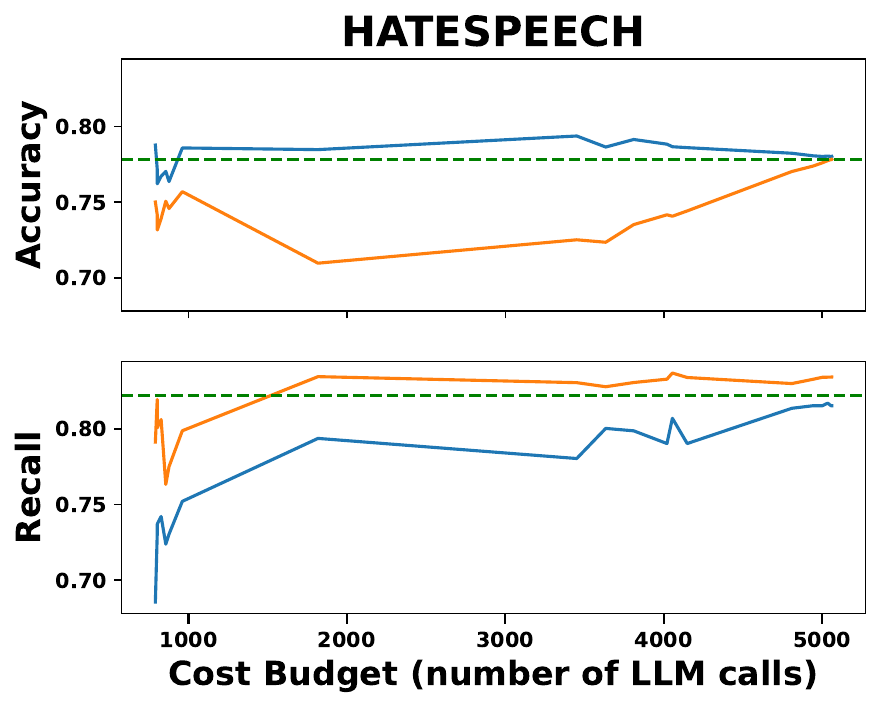}}
{\includegraphics[height=95pt]{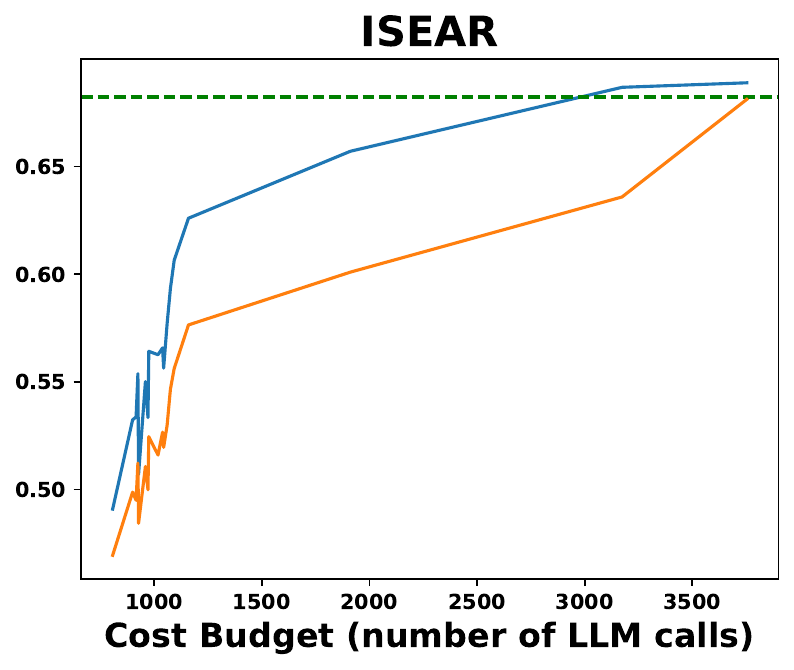}} 
{\includegraphics[height=95pt]{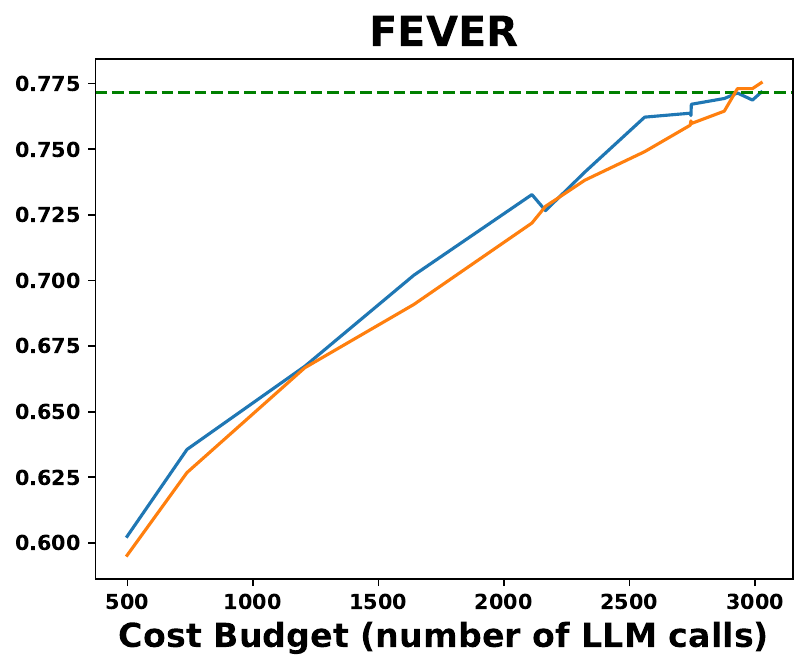}} 

    \caption{Accuracy curve (and Recall curve for HateSpeech) with respect to costs, respectively using GPT-3.5 Turbo and Llama 2 70B Chat as the LLM in a larger cascade that also comprises logistic regression, BERT-base, and BERT-large. }
    \label{fig:large_tradeoff_gpt}
\end{figure*}



\textbf{ISEAR} \\
\textit{System Prompt}: In this task, you will be performing a classification exercise aimed at identifying the underlying emotion conveyed by a given sentence.
The emotions to consider are as follows:\\
Anger: Anger is a strong feeling of displeasure, hostility, or frustration. \\
Joy: Joy is a positive and uplifting emotion characterized by happiness, elation, and a sense of contentment. \\
Sadness: Sadness is a feeling of sorrow, unhappiness, or despondency. \\
Guilt: Guilt is a self-directed emotion that arises from a sense of wrongdoing or moral transgression. \\
Shame: Shame is a powerful emotion associated with feeling embarrassed, humiliated, or unworthy. \\
Fear: Fear is an emotion triggered by a perceived threat or danger. \\
Disgust: Disgust is an aversive emotion linked to feelings of revulsion, repulsion, or strong distaste. It arises in response to things that are offensive or unpleasant.\\
Your task is to analyze each sentence provided and categorize it into one of these emotions based on the dominant feeling conveyed by the text.\\
This classification will require an understanding of the nuances of human emotions and the context in which the sentences are presented.
Remember, you have to classify the sentences using only anger, joy, sadness, guilt, shame, fear or disgust. Please respond with only the word and nothing else. \\
\textit{User Prompt}: \texttt{\{SENTENCE\}} \textbackslash\textbackslash Classify the emotion of this hypothetical sentence. Respond in exactly one word in all lowercase with a response in the exact format requested by the user. Do not acknowledge my request with ``sure" or in any other way besides going straight to the answer. Only answer in exactly one word.

\textbf{FEVER} \\
\textit{System Prompt}: You are a helpful, respectful and honest assistant. This is a fact-checking task. Use your knowledge to determine whether a given claim is true or false. Answer only in ``true" or ``false" without providing any explanations.   \\
\textit{User Prompt}: In June 2017, the following claim was made:  \texttt{\{CLAIM\}}.

\subsection{Experimental Configurations}
The experiments involved querying Llama 2 70B Chat utilized a single machine equipped with 8 NVIDIA A40 GPUs, each with 48GB of memory, running CUDA 12.0. All the other experiments were conducted on a machine with 4 NVIDIA Quadro RTX 8000 GPUs (48GB memory each) on CUDA 12.2.

The detailed hyperparameter settings for online cascade learning are listed in Table \ref{app:tab1} and \ref{app:tab2}. We tuned the hyperparameters using a grid search method on a separate validation set, which is a standard practice to avoid overfitting. In particular, we used the training set prepared for the knowledge distillation (as mentioned in Section \ref{sec:baseline}) as our online cascade learning method’s validation set.

Specifically, for the hyperparameters $\beta$ and $\mu$, we observed that our experimental results are notably robust to variations in $\beta$. Regarding $\mu$, we tuned it specifically in the context of different cost budgets, which was essential for plotting Figure \ref{fig:tradeoff_gpt} and Figure \ref{fig:tradeoff_llama} in our paper. By adjusting $\mu$, we were able to effectively manage the cost budgets to evaluate our method’s performance on cost-accuracy trade-offs, which is a critical aspect of our research objective.

Note that the ``learning rate'' in the table refers to the learning rates of the MLPs (in Section \ref{sec:conf}: Confidence Calibration), not the models'. We used a consistent configuration for both online cascade learning and the distillation baselines by setting BERT-base's batch size to 8, the learning rate to 0.00001, and the number of epochs to 5. 

\begin{table}[t]
\centering
\caption{Hyperparameter settings for online cascade learning experiments with GPT-3.5 Turbo as the LLM.}
\label{tab:hyperparameters1}
\begin{tabular}{lcccccc}
\hline
 & Model Cost & Cache Size & Batch Size & Learning Rate & Decaying Factor & Calibration Factor \\
\hline
\multicolumn{7}{l}{\textbf{IMDB, Small Cascade}} \\
LR & 1 & 8 & 8 & 0.0007 & 0.97 & 0.4 \\
BERT-base & 1182 & 16 & 8 & 0.0007 & 0.95 & 0.3 \\
\hline
\multicolumn{7}{l}{\textbf{IMDB, Large Cascade}} \\
LR & 1 & 8 & 8 & 0.0007 & 0.99 & 0.45 \\
BERT-base & 3 & 16 & 8 & 0.0007 & 0.97 & 0.4 \\
BERT-large & 1182 & 32 & 16 & 0.0007 & 0.95 & 0.4 \\
\hline
\multicolumn{7}{l}{\textbf{HateSpeech, Small Cascade}} \\
LR & 1 & 8 & 8 & 0.001 & 0.97 & 0.4 \\
BERT-base & 1182 & 16 & 8 & 0.0007 & 0.9 & 0.4 \\
\hline
\multicolumn{7}{l}{\textbf{HateSpeech, Large Cascade}} \\
LR & 1 & 8 & 8 & 0.001 & 0.99 & 0.45 \\
BERT-base & 3 & 16 & 8 & 0.0007 & 0.97 & 0.45 \\
BERT-large & 1182 & 32 & 16 & 0.0007 & 0.95 & 0.45 \\
\hline
\multicolumn{7}{l}{\textbf{ISEAR, Small Cascade}} \\
LR & 1 & 8 & 8 & 0.0007 & 0.8 & 0.15 \\
BERT-base & 1182 & 16 & 8 & 0.0007 & 0.9 & 0.45 \\
\hline
\multicolumn{7}{l}{\textbf{ISEAR, Large Cascade}} \\
LR & 1 & 8 & 8 & 0.0007 & 0.99 & 0.4 \\
BERT-base & 3 & 16 & 8 & 0.0007 & 0.97 & 0.35 \\
BERT-large & 1182 & 32 & 16 & 0.0007 & 0.95 & 0.3 \\
\hline
\multicolumn{7}{l}{\textbf{FEVER, Small Cascade}} \\
LR & 1 & 8 & 8 & 0.0007 & 0.97 & 0.4 \\
BERT-base & 1182 & 16 & 8 & 0.0007 & 0.95 & 0.3 \\
\hline
\multicolumn{7}{l}{\textbf{FEVER, Large Cascade}} \\
LR & 1 & 8 & 8 & 0.0007 & 0.97 & 0.4 \\
BERT-base & 3 & 16 & 8 & 0.001 & 0.95 & 0.4 \\
BERT-large & 1182 & 32 & 16 & 0.0001 & 0.93 & 0.4 \\
\hline
\end{tabular}
\label{app:tab1}
\end{table}

\begin{table}[h!]
\centering
\caption{Hyperparameter settings for online cascade learning experiments with Llama 2 70B Chat as the LLM.}
\label{tab:hyperparameters2}
\begin{tabular}{lcccccc}
\hline
 & Model Cost & Cache Size & Batch Size & Learning Rate & Decaying Factor & Calibration Factor \\
\hline
\multicolumn{7}{l}{\textbf{IMDB, Small Cascade}} \\
LR & 1 & 8 & 8 & 0.0007 & 0.97 & 0.4 \\
BERT-base & 636 & 16 & 8 & 0.0007 & 0.95 & 0.3 \\
\hline
\multicolumn{7}{l}{\textbf{IMDB, Large Cascade}} \\
LR & 1 & 8 & 8 & 0.0007 & 0.99 & 0.45 \\
BERT-base & 3 & 16 & 8 & 0.0007 & 0.97 & 0.4 \\
BERT-large & 636 & 32 & 16 & 0.0007 & 0.95 & 0.4 \\
\hline
\multicolumn{7}{l}{\textbf{HateSpeech, Small Cascade}} \\
LR & 1 & 8 & 8 & 0.001 & 0.97 & 0.4 \\
BERT-base & 636 & 16 & 8 & 0.0007 & 0.9 & 0.4 \\
\hline
\multicolumn{7}{l}{\textbf{HateSpeech, Large Cascade}} \\
LR & 1 & 8 & 8 & 0.001 & 0.99 & 0.45 \\
BERT-base & 3 & 16 & 8 & 0.0007 & 0.97 & 0.45 \\
BERT-large & 636 & 32 & 16 & 0.0007 & 0.95 & 0.45 \\
\hline
\multicolumn{7}{l}{\textbf{ISEAR, Small Cascade}} \\
LR & 1 & 8 & 8 & 0.0007 & 0.8 & 0.15 \\
BERT-base & 636 & 16 & 8 & 0.0007 & 0.9 & 0.45 \\
\hline
\multicolumn{7}{l}{\textbf{ISEAR, Large Cascade}} \\
LR & 1 & 8 & 8 & 0.0007 & 0.99 & 0.4 \\
BERT-base & 3 & 16 & 8 & 0.0007 & 0.97 & 0.35 \\
BERT-large & 636 & 32 & 16 & 0.0007 & 0.95 & 0.3 \\
\hline
\multicolumn{7}{l}{\textbf{FEVER, Small Cascade}} \\
LR & 1 & 8 & 8 & 0.0007 & 0.97 & 0.4 \\
BERT-base & 636 & 16 & 8 & 0.0007 & 0.95 & 0.3 \\
\hline
\multicolumn{7}{l}{\textbf{FEVER, Large Cascade}} \\
LR & 1 & 8 & 8 & 0.0007 & 0.97 & 0.4 \\
BERT-base & 3 & 16 & 8 & 0.001 & 0.95 & 0.4 \\
BERT-large & 636 & 32 & 16 & 0.0001 & 0.93 & 0.4 \\
\hline
\end{tabular}
\label{app:tab2}
\end{table}

\section{Discussion}

\begin{table}[t]
    \centering
    \begin{tabular}{lccc}
    \toprule
    \textbf{Length} & \textbf{Count} &  \textbf{Average Length} & \textbf{GPT-3.5 Turbo Accuracy} \\
    \midrule
    52-664 &4975 &481.92& 95.54\% \\
    664-843 & 5018 & 745.86 & 95.08\% \\
    843-1160 & 5003 & 985.05 & 93.96\% \\
    1160-1852 & 5001 & 1453.49 & 93.74\% \\
    1852-13704 & 5002 & 2953.95 & 92.44\% \\
    \midrule
    Total & 25000 & 1325.07 & 94.15\% \\
    \bottomrule
    
    \end{tabular}
    \caption{GPT-3.5 Turbo's classification accuracy across different IMDB review lengths. Longer inputs are typically more complex and thus have lower average accuracies. }
    \label{tab:length}
\end{table}

\subsection{Analysis of Training \& Inference Cost Equilibrium}
\label{app:equi}

We conduct a thorough analysis of the training overhead for the models involved in our cascade to quantify the computational costs incurred in our approach. Below are the computation costs for training or inference over one sample (the computational costs of the confidence calibration MLP in Section \ref{sec:conf}, inference: $897\ Flops$, training: $1794\ Flops$, are negligible):

\vspace{-\topsep}
\begin{itemize}
\itemsep-0.2em 
    \item Logistic Regression training: $33.8 \times 10^4 Flops$ (floating-point operations).
    \item Logistic Regression inference: $16.9 \times 10^4 Flops$.
    \item BERT-base training: $18.5 \times 10^7 Flops$.
    \item BERT-base inference: $9.2 \times 10^7 Flops$.
    \item BERT-large training: $55.5\times10^7 Flops$.
    \item BERT-large inference: $27.7\times10^7 Flops$.
\end{itemize}

Comparatively, Llama 2 70B’s inference costs for generating one token is approximately $39.86 \times 10^{15}Flops$ (we have no access to GPT-3.5 Turbo’s running statistics). Even if all the small models are
\textbf{consistently updated per sample}, the per-sample training costs of a large cascade $\underset{\textit{LR}}{33.8\times10^4}+\underset{\textit{BERT-base}}{18.5\times10^7} +\underset{\textit{BERT-large}}{55.5\times10^7} \approx7.4\times10^8 Flops$ is still $5.3\times10^7$ times smaller than the per-sample Llama 2 70B inference costs. 

Therefore, the computational costs regarding the deferral policy's inference and training are minimal when comparing against the enormous LLM inference costs. In the real world, as the smaller models' capabilities grow over time and the need for training decreases, the incurred model training overhead is also negligible compared to the tremendous LLM inference costs saved with our approach.

More formally, we can formulate the theoretical cost equilibrium as follows: 
$$100\% \cdot C = x \cdot M + (1-x) \cdot (M + 2M + C), $$ where the LHS refers to the inference overheads for using the LLM to process all queries, and the RHS comprises the maximum inference costs forusing the small models to handle $x\%$ of the queries (which are not deferred to the LLM), and the LLM inference costs, plus the inference \& training costs for updating the small models when handling the rest $(1-x\%)$ queries. This equation can be further simplified to $$M = \frac{xC}{3-2x},$$ where $C$ represents the LLM inference cost, $x$ is the proportion of queries handled by small models, and $M$ indicates small models’ aggregated inference costs.

For example, assuming $C=39.86\times10^{15} Flops$, $x=0.5$, then $M\approx 9.95\times10^{15} Flops$,typically refers to a total number of parameters around 17.5 Billion. That means, when using Llama 2 70B as the LLM, even if the smaller models can only handle 50\% of the input queries, as long as the smaller models’ total number of parameters does not exceed 17.5B, our approach can still save costs.

\subsection{Potential Overfitting in Larger Cascades}
\label{app:larger}
When scaling up the cascade, the task complexity may influence the appropriate cascade size, as suggested by the performance dynamics observed in Section \ref{sec:larger}. There are several factors that may affect this:

\paragraph{Task Complexity vs. Cascade Size} The HateSpeech dataset, despite its class imbalance, represents arelatively simple binary classification task where hate speech is often identifiable through specific keywords. Our findings, illustrated in Figure \ref{fig:hatespeech_exp}, show that a basic cascade of logistic regression and BERT-base can already handle ~90\% of the queries effectively, matching the performance of more complex models like GPT3.5 Turbo. This indicates that for simpler tasks, adding more models to the cascade might introduce unnecessary noise, complicating the deferral policy learning and thus degrading overall performance.

\paragraph{Capability Gap in Models} The performance impact of adding more models to a cascade also depends onthe capability gap between these models. In our larger cascade setup, we incorporated BERT-large alongside BERT-base. However, both models exhibit similar performance on the HateSpeech dataset, meaning the addition of BERT-large brought minimal benefit. This redundancy shows that the effectiveness of a cascade does not solely rely on adding more models, but rather on ensuring that each model contributes unique capabilities to the given task.

\paragraph{Improved Performance on More Complex Dataset} In contrast, for the more complex ISEAR dataset, the larger cascade that includes BERT-large outperformed the smaller cascade, despite the ISEAR dataset's smaller size (3833) compared to HateSpeech (5352). This outcome supports the idea that as task complexity increases, a larger cascade can still better balance the cost-performance trade-off, leveraging the distinct strengths of each model in the cascade.

\subsection{Challenges in Scaling Up Cascade}
\label{app:llm}

Scaling up the cascade for more complex task processing is technically feasible and compatible with our proposed framework, but can present notable challenges:

\paragraph{Confidence Measurement in Generative Tasks} For complex generative tasks, such as conversation, the measurement of confidence scores is also challenging. For example, traditional confidence estimation methods may not be well-suited for measuring LLMs’ token-by-token generation \cite{gupta2024language}. This difficulty in accurately assessing model confidence complicates the deferral policy learning within cascades, potentially leading to suboptimal routing of queries and affecting overall system efficiency.

\paragraph{Constraints with API-Based LLMs} More practically, many current LLMs, especially those accessible only via APIs, do not support fine-tuning. This restriction hinders our ability to tailor each LLM in the cascade to specific tasks in an online learning setting, presenting a significant limitation for customizing LLM-only cascades.

\paragraph{Computational Costs} The most important hurdle in a more complex cascade is the significant computational expense associated with online learning. For example, in an LLM-only cascade, updating LLMs in real-time (even if using efficient training techniques like LoRA) demands substantial resources, making it impractical for many applications. This computational burden not only affects scalability but also limits the frequency and extent to which LLMs can be updated, impacting the cascade's adaptability and performance. As analyzed in Appendix \ref{app:equi}'s cost equilibrium, when the learnable model space grows too large, the training costs would offset the saved inference costs, which is against our initial motivation of enhancing cost-efficiency.

\end{document}